\documentclass{article}
\pdfoutput=1

\newcommand{\fullpaper}{}
\newcommand{\conferencepaper}[1]{}

\usepackage{colt10e}

\usepackage{amsthm,amsfonts,amsmath,amssymb,epsfig,color,float,graphicx,verbatim}

\newtheorem{theorem}{Theorem}

\newtheorem{lemma}{Lemma}

\newtheorem{example}{Example}

\renewcommand{\Pr}{\mathbb{P}}
\newcommand{\reals}{\mathbb{R}}
\newcommand{\E}{\mathbb{E}}

\newcommand{\argmin}[1]{\underset{#1}{\mathrm{argmin}}}

\newcommand{\summ}{\displaystyle \sum}

\newcommand{\be}{\mathbf{e}}
\newcommand{\bx}{\mathbf{x}}
\newcommand{\bw}{\mathbf{w}}

\newcommand{\bu}{\mathbf{u}}

\newcommand{\bomega}{\boldsymbol{\omega}}

\newcommand{\Dcal}{\mathcal{D}}

\newcommand{\Wcal}{\mathcal{W}}

\newcommand{\norm}[1]{\|#1\|}
\newcommand{\inner}[1]{\langle#1\rangle}

\newcommand{\secref}[1]{Sec.~\ref{#1}}
\newcommand{\subsecref}[1]{Subsection~\ref{#1}}
\newcommand{\figref}[1]{Fig.~\ref{#1}}
\renewcommand{\eqref}[1]{Eq.~(\ref{#1})}
\newcommand{\lemref}[1]{Lemma~\ref{#1}}

\newcommand{\thmref}[1]{Thm.~\ref{#1}}

\newcommand{\appref}[1]{Appendix~\ref{#1}}
\newcommand{\algoref}[1]{Algorithm~\ref{#1}}
\newcommand{\subref}[1]{Subroutine~\ref{#1}}

\newenvironment{algorithm}[1][\  ] %
{ \rm
\begin{tabbing}
....\=.....\=.....\=.....\=.....\=  \+ \kill
} %
{\end{tabbing} }

\floatstyle{ruled}
\newfloat{Algorithm}{H}{alg}
\newfloat{Subroutine}{H}{sub}

\newenvironment{Balgorithm} %
{
\begin{minipage}{1.0\linewidth} \begin{algorithm} %
} { \end{algorithm} \end{minipage} }



\title{
Online Learning of Noisy Data with Kernels
}
\author{Nicol\`o Cesa-Bianchi\\
Universit\`a degli Studi di Milano\\
\texttt{cesa-bianchi@dsi.unimi.it}
\And
Shai Shalev Shwartz\\
The Hebrew University\\
\texttt{shais@cs.huji.ac.il}
\And
Ohad Shamir\\
The Hebrew University\\
\texttt{ohadsh@cs.huji.ac.il}}

\begin{document}

\maketitle

\begin{abstract}
  We study online learning when individual instances are corrupted by
  adversarially chosen random noise. We assume the noise distribution is
  unknown, and may change over time with no restriction other than having zero
  mean and bounded variance.  Our technique relies on a family of unbiased
  estimators for non-linear functions, which may be of independent interest.
  We show that a variant of online gradient descent can learn functions in any
  dot-product (e.g., polynomial) or Gaussian kernel space with any analytic
  convex loss function.  Our variant uses randomized estimates that need to
  query a random number of noisy copies of each instance, where with high
  probability this number is upper bounded by a constant. Allowing such
  multiple queries cannot be avoided: Indeed, we show that online learning is
  in general impossible when only one noisy copy of each instance can be
  accessed.
\end{abstract}

\section{Introduction}
%
%
%
%
In many machine learning applications training data are typically collected by
measuring certain physical quantities. Examples include bioinformatics, medical
tests, robotics, and remote sensing. These measurements have errors that may be
due to several reasons: sensor costs, communication constraints, or intrinsic
physical limitations.  In all such cases, the learner trains on a distorted
version of the actual ``target'' data, which is where the learner's predictive
ability is eventually evaluated.  In this work we investigate the extent to
which a learning algorithm can achieve a good predictive performance when
training data are corrupted by noise with unknown distribution.

We prove upper and lower bounds on the learner's cumulative loss in the
framework of online learning, where examples are generated by an arbitrary and
possibily adversarial source. We model the measurement error via a random
perturbation which affects each instance observed by the learner. We do not
assume any specific property of the noise distribution other than zero-mean and
bounded variance. Moreover, we allow the noise distribution to change at every
step in an adversarial way and fully hidden from the learner. Our positive
results are quite general: by using a randomized unbiased estimate for the loss
gradient and a randomized feature mapping to estimate kernel values, we show
that a variant of online gradient descent can learn functions in any
dot-product (e.g., polynomial) or Gaussian RKHS under any given analytic convex
loss function. Our techniques are readily extendable to other kernel types as
well.

In order to obtain unbiased estimates of loss gradients and kernel
values, we allow the learner to query a random number of independently
perturbed copies of the current unseen instance. We show how low-variance
estimates can be computed using a number of queries that is \textsl{constant}
with high probability.
This is in sharp contrast with standard averaging techniques which attempts
to directly estimate the noisy instance, as these require a sample whose size
depends on the scale of the problem.
Finally, we formally show that learning is impossible, even without kernels,
when only one perturbed copy of each instance can be accessed. This is true
for essentially any reasonable loss function.

Our paper is organized as follows. In the next subsection we discuss related
work. In \secref{sec:setting} we introduce our setting and justify some of our
choices. In \secref{sec:results} we present our main results but before that,
in \secref{sec:techniques}, we discuss the techniques used to obtain them. In
the same section, we also explain why existing techniques are insufficient to
deal with our problem. The detailed proofs and subroutine implementations
appear in \secref{sec:proofs}, with some of the more technical lemmas and
proofs relegated to \fullpaper{the
  appendix}\conferencepaper{\cite{CesShalSham10}}. We wrap up with a discussion
on possible avenues for future work in \secref{sec:futurework}.

\subsection{Related Work}
In the machine learning literature, the problem of learning from noisy
examples, and, in particular, from noisy training instances, has traditionally
received a lot of attention ---see, for example, the recent
survey~\cite{NOF10}.  On the other hand, there are comparably few
theoretically-principled studies on this topic. Two of them focus on models
quite different from the one studied here: random attribute noise in PAC
boolean learning~\cite{BJT03,GS95}, and malicious noise~\cite{KL93,CDFSS99}.
In the first case, learning is restricted to classes of boolean functions and
the noise must be independent across each boolean coordinate.  In the second
case, an adversary is allowed to perturb a small fraction of the training
examples in an arbitrary way, making learning impossible in a strong
informational sense unless this perturbed fraction is very small (of the order
of the desired accuracy for the predictor).

The previous work perhaps closest to the one presented here is~\cite{Lit91},
where binary classification mistake bounds are proven for the online Winnow
algorithm in the presence of attribute errors. Similarly to our setting,
the sequence of instances observed by the learner is chosen by an adversary.
However, in~\cite{Lit91} the noise is generated by an adversary, who may
change the value of each attribute in an arbitrary way. The final mistake
bound, which only applies when the noiseless data sequence is linearly
separable without kernels, depends on the sum of all adversarial
perturbations.

\section{Setting}\label{sec:setting}
We consider a setting where the goal is to predict
values $y\in\reals$ based on instances $\bx\in\reals^d$.
In this paper we focus on kernel-based linear predictors of
the form $\bx\mapsto\inner{\bw,\Psi(\bx)}$, where $\Psi$ is a feature mapping
into some reproducing kernel Hilbert space (RKHS). We assume there exists a
kernel function that efficiently implements dot products in that space,
i.e., $k(\bx,\bx')=\inner{\Psi(\bx),\Psi(\bx')}$. Note that a special case of
this setting is linear kernels, where $\Psi(\cdot)$ is the
identity mapping and $k(\bx,\bx')=\inner{\bx,\bx'}$.

The standard online learning protocol for linear prediction with kernels
is defined as follows:
at each round $t$, the learner picks a linear hypothesis $\bw_t$ from the
RKHS. The adversary then picks an example $(\bx_t,y_t)$ and reveals it
to the learner.
The loss suffered by the learner is $\ell(\inner{\bw_t,\Psi(\bx_t)},y_t)$,
where $\ell$ is a known and fixed loss function.
The goal of the learner is to minimize \emph{regret} with respect to a
fixed convex set of hypotheses $\Wcal$, namely
\[
\sum_{t=1}^{T}\ell(\inner{\bw_t,\Psi(\bx_t)},y_t)-\min_{\bw\in
  \Wcal}\sum_{t=1}^{T}\ell(\inner{\bw,\Psi(\bx_t)},y_t).
\]
Typically, we wish to find a strategy for the learner, such that no matter what
is the adversary's strategy of choosing the sequence of examples, the
expression above is sub-linear in $T$.

We now make the following twist, which limits the information available to the
learner: instead of receiving $(\bx_t,y_t)$, the learner observes $y_t$ and is
given access to an \emph{oracle} $A_t$. On each call, $A_t$ returns an
independent copy of $\bx_t+Z_t$, where $Z_t$ is a zero-mean random vector with
some known finite bound on its variance (in the sense that
$\E\bigl[\norm{{Z}_t}^2\bigr] \leq a$ for some uniform constant $a$).  In
general, the distribution of $Z_t$ is unknown to the learner. It might be
chosen by the adversary, and change from round to round or even between
consecutive calls to $A_t$. Note that here we assume that $y_t$ remains
unperturbed, but we emphasize that this is just for simplicity - our techniques
can be readily extended to deal with noisy values as well.

The learner may call $A_t$ more than once. In fact, as we discuss later on,
being able to call $A_t$ more than once is necessary for the learner to have
any hope to succeed. On the other hand, if the learner calls $A_t$ an unlimited
number of times, it can reconstruct $\bx_t$ arbitrarily well by averaging, and
we are back to the standard learning setting.
In this paper we focus on learning algorithms that call
$A_t$ only a small, essentially constant number of times, which depends only on
our choice of loss function and kernel (rather than $T$, the norm of $\bx_t$,
or the variance of $Z_t$, which will happen with na\"{i}ve averaging techniques).
Moreover, since the number of queries is bounded with very high 
probability, one can even produce an algorithm with an absolute bound on the 
number of queries, which will fail or introduce some bias with an arbitrarily 
small probability. For simplicity, we ignore these issues in this paper.

In this setting, we wish to minimize the regret in hindsight with respect to
the unperturbed data and averaged over the noise introduced by the oracle,
namely
\begin{equation}\label{eq:regret_alt2}
\E\left[\sum_{t=1}^{T}\ell(\inner{\bw_t,\Psi(\bx_t)},y_t)
-
    \min_{\bw\in \Wcal}\sum_{t=1}^{T}\ell(\inner{\bw,\Psi(\bx_t)},y_t)\right]
\end{equation}
where the random quantities are the predictors $\bw_1,\bw_2,\dots$ generated
by the learner, which depend on the observed noisy instances
(in \fullpaper{the appendix}\conferencepaper{\cite{CesShalSham10}}, we briefly discuss alternative regret measures, and why they
are unsatisfactory). This kind of regret is relevant where we actually wish to
learn from data, without the noise causing a hindrance. In particular, consider
the batch setting, where the examples $\{(\bx_t,y_t)\}_{t=1}^{T}$ are actually
sampled i.i.d.\ from some unknown distribution, and we wish to find a predictor
which minimizes the expected loss $\E[\ell(\inner{\bw,\bx},y)]$ with respect to new
examples $(\bx,y)$. Using standard online-to-batch conversion techniques, if we
can find an online algorithm with a sublinear bound on~\eqref{eq:regret_alt2},
then it is possible to construct learning algorithms for the batch setting
which are robust to noise. That is, algorithms generating a predictor $\bw$
with close to minimal expected loss $\E[\ell(\inner{\bw,\bx},y)]$ among all
$\bw\in \Wcal$. 

While our techniques are quite general, the exact algorithmic and theoretical
results depend a lot on which loss function and kernel is used. Discussing the
loss function first, we will assume that $\ell(\inner{\bw,\Psi(\bx)},y)$ is a
convex function of $\bw$ for each example $(\bx,y)$. Somewhat abusing
notation, we assume the loss can be written either as
$
\ell(\inner{\bw,\Psi(\bx)},y) = f(y\inner{\bw,\Psi(\bx)})
$
or as
$
\ell(\inner{\bw,\Psi(\bx)},y) = f(\inner{\bw,\Psi(\bx)}-y)
$
for some function $f$.
We refer to the first type as \emph{classification losses},
as it encompasses most reasonable losses for classification, where
$y\in \{-1,+1\}$ and the goal is to predict the label. We refer to the second
type as \emph{regression losses}, as it encompasses most reasonable regression
losses, where $y$ takes arbitrary real values. For simplicity, we present some
of our results in terms of classification losses, but they all hold for
regression losses as well with slight modifications.

We present our results under the assumption that the loss function is
``smooth'', in the sense that $\ell'(a)$ can be written as
$\sum_{n=0}^{\infty}\gamma_n a^n$, for any $a$ in its domain. This assumption
holds for instance for the squared loss $\ell(a)=a^2$, the exponential loss
$\ell(a)=\exp(a)$, and smoothed versions of loss functions such as the hinge
loss and the absolute loss (we discuss examples in more details in
\subsecref{subsec:examples}). This assumption can be relaxed under certain
conditions, and this is further discussed in \subsecref{subsec:indian}.

Turning to the issue of kernels, we note that the general presentation of our
approach is somewhat hampered by the fact that it needs to be tailored to the
kernel we use. In this paper, we focus on two families of kernels:

\smallskip\noindent
\emph{Dot Product Kernels}: the kernel $k(\bx,\bx')$ can be written as a
function of $\inner{\bx,\bx'}$. Examples of such kernels $k(\bx,\bx')$ are
linear kernels $\inner{\bx,\bx'}$; homogeneous polynomial kernels
$(\inner{\bx,\bx'})^n$, inhomogeneous polynomial kernels
$(1+\inner{\bx,\bx'})^n$; exponential kernels $e^{\inner{\bx,\bx'}}$;
binomial kernels $(1+\inner{\bx,\bx'})^{-\alpha}$, and more
(see for instance~\cite{ScholkopfSm02,SteinwartChri08}).

\smallskip\noindent
\emph{Gaussian Kernels}: $k(\bx,\bx')=e^{-\norm{\bx-\bx'}^2/\sigma^2}$
for some $\sigma^2>0$.

\smallskip
Again, we emphasize that our techniques are extendable to other kernel types 
as well.

\section{Techniques}\label{sec:techniques}
Our results are based on two key ideas: the use of online gradient descent
algorithms, and construction of unbiased gradient estimators in the kernel
setting. The latter is based on a general method to build unbiased estimators
for non-linear functions, which may be of independent interest.

\subsection{Online Gradient Descent}
There exist well developed theory and algorithms for dealing with the standard
online learning setting, where the example $(\bx_t,y_t)$ is revealed after each
round, and for general convex loss functions. One of the simplest and most well
known ones is the online gradient descent algorithm due to
Zinkevich~\cite{Zinkevich03}.
Since this algorithm forms a basis for our algorithm in
the new setting, we briefly review it below (as adapted to our setting).

The algorithm initializes the classifier $\bw_1=0$. At round $t$, the algorithm
predicts according to $\bw_t$, and updates the learning rule according to
$\bw_{t+1}=P\bigl(\bw_t-\eta_t\nabla_t\bigr)$,
where $\eta_t$ is a suitably chosen constant which might depend on $t$;
$\nabla_t = \ell'\bigl(y_t\inner{\bw_t,\Psi(\bx_t)}\bigr)y_t\Psi(\bx_t)$ is the
\textsl{gradient} of $\ell\bigl(y_t\inner{\bw,\Psi(\bx_t)}\bigr)$
with respect to $\bw_t$; and $P$ is a projection operator on the
convex set $\Wcal$, on whose elements we wish to
achieve low regret. In particular, if we wish to compete with hypotheses of
bounded squared norm $B_{\bw}$, $P$ simply involves rescaling the norm of the
predictor so as to have squared norm at most $B_{\bw}$. With this algorithm,
one can prove regret bounds with respect to any $\bw\in \Wcal$.

A ``folklore'' result about this algorithm is that in fact, we do not need to 
update the predictor by the gradient at each step. Instead, it is enough to 
update by some random vector of bounded variance, which merely equals the 
gradient in expectation. This is a useful property in settings where 
$(\bx_t,y_t)$ is not revealed to the learner, and has been used before, such 
as in the online bandit setting (see for 
instance~\cite{CesaBianchiLu06,FlaxmanKalMcm05,AbernethyHazRakh08}). Here, we 
will use this property in a new way, in order to devise algorithms which are 
robust to noise. When the kernel and loss function are linear (e.g., 
$\Psi(\bx)=\bx$ and $\ell(a)=ca+b$ for some constants $b,c$), this property 
already ensures that the algorithm is robust to noise without any further 
changes. This is because the noise injected to each $\bx_t$ merely causes the 
exact gradient estimate to change to a random vector which is correct in 
expectation: If we assume $\ell$ is a classification loss, then
\begin{align*}
&\E\left[\ell'(y_t\inner{\bw_t,\Psi(\tilde{\bx}_t)})\Psi(\tilde{\bx}_t)\right] 
~=~ \E\left[c\tilde{\bx}_t\right] = \bx_t.
\end{align*}
 
On the other hand, when we use nonlinear kernels and nonlinear loss functions,
using standard online gradient descent leads to systematic and unknown biases
(since the noise distribution is unknown), which prevents the method from
working properly. To deal with this problem, we now turn to describe a technique
for estimating expressions 
such as $\ell'\bigl(y_t\inner{\bw_t,\Psi(\bx_t)}\bigr)$
in an unbiased manner. 
In \subsecref{subsec:featuremapping}, we discuss how $\Psi(\bx_t)$ can be 
estimated in an unbiased manner.

\subsection{Unbiased Estimators for Non-Linear Functions}\label{subsec:indian}
Suppose that we are given access to independent copies of a real random
variable $X$, with expectation $\E[X]$, and some real function $f$, and we wish to
construct an unbiased estimate of $f(\E[X])$. If $f$ is a linear function, then
this is easy: just sample $x$ from $X$, and return $f(x)$. By linearity,
$\E[f(X)]=f(\E[X])$ and we are done.
The problem becomes less trivial when $f$ is a general, non-linear function,
since usually $\E[f(X)]\neq f(\E[X])$. In fact, when $X$ takes finitely many
values and $f$ is not a polynomial function, one can prove that no unbiased
estimator can exist (see \cite{Paninski03}, Proposition~8 and its proof).
Nevertheless, we show how in many cases one can construct an unbiased
estimator of $f(\E[X])$, including cases covered by the impossibility result.
There is no contradiction, because we do not construct a ``standard'' estimator.
Usually, an estimator is a function from a given sample to the range of the
parameter we wish to estimate. An implicit assumption is that the size of the
sample given to it is fixed, and this is also a crucial ingredient in the
impossibility result. We circumvent this by constructing an estimator based
on a random number of samples.

Here is the key idea: suppose $f:\reals\rightarrow\reals$ is any function 
continuous on a bounded interval. It is well known that one can construct a 
sequence of polynomials $(Q_{n}(\cdot))_{n=1}^{\infty}$, where $Q_{n}(\cdot)$ 
is a polynomial of degree $n$, which converges uniformly to $f$ on the 
interval. If $Q_n(x)=\sum_{i=0}^{n}\gamma_{n,i} x^i$, let 
$Q'_n(x_1,\ldots,x_n)=\sum_{i=0}^{n}\gamma_{n,i}\prod_{j=1}^{i}x_j$. Now, 
consider the estimator which draws a positive integer $N$ according to some 
distribution $\Pr(N=n)=p_n$, samples $X$ for $N$ times to get 
$x_1,x_2,\ldots,x_N$, and returns $ 
\tfrac{1}{p_N}\left(Q'_N(x_1,\ldots,x_N)-Q'_{N-1}(x_1,\ldots,x_{N-1})\right), 
$ where we assume $Q'_{0}=0$. The expected value of this estimator is equal to:
\begin{align*}
&\E_{N,x_1,\ldots,x_N}
\left[\frac{1}{p_N}
\left(Q'_N(x_1,\ldots,x_N)-Q'_{N-1}(x_1,\ldots,x_{N-1})\right)\right] \\
&=
\sum_{n=1}^{\infty}\frac{p_n}{p_n}\E_{x_1,\ldots,x_n}
\left[Q'_n(x_1,\ldots,x_n)-Q'_{n-1}(x_1,\ldots,x_{n-1})\right]\\
&=\sum_{n=1}^{\infty}\bigl(Q_{n}(\E[X])-Q_{n-1}(\E[X])\bigr) = f(\E[X]).
\end{align*}
Thus, we have an unbiased estimator of $f(\E[X])$. 

This technique appeared in a rather obscure early 1960's paper \cite{singh64}
from sequential estimation theory, and appears to be little known, particularly
outside the sequential estimation community.  However, we believe this
technique is interesting, and expect it to have useful applications for other
problems as well.

While this may seem at first like a very general result, the variance of this
estimator must be bounded for it to be useful. Unfortunately, this is not true
for general continuous functions. More precisely, let $N$ be distributed
according to $p_n$, and let $\theta$ be the value returned by the estimator. In
\cite{BhandariBose1990}, it is shown that if $X$ is a Bernoulli random
variable, and if $\E[\theta N^k]<\infty$ for some integer $k\geq 1$, then $f$
must be $k$ times continuously differentiable. Since $\E[\theta N^k]\leq
(\E[\theta^2]+\E[N^{2k}])/2$, this means that functions $f$ which yield an
estimator with finite variance, while using a number of queries with bounded
variance, must be continuously differentiable. Moreover, in case we desire the
number of queries to be essentially constant (i.e.  choose a distribution for
$N$ with exponentially decaying tails), we must have $\E[N^{k}]<\infty$ for all
$k$, which means that $f$ should be infinitely differentiable (in fact,
in~\cite{BhandariBose1990} it is conjectured that $f$ must be analytic in such
cases).

Thus, we focus in this paper on functions $f$ which are analytic, i.e., they
can be written as $f(x)=\sum_{i=0}^{\infty}\gamma_i x^i$ for appropriate
constants $\gamma_0,\gamma_1,\ldots$. In that case, $Q_n$ can simply be the
truncated Taylor expansion of $f$ to order $n$, i.e.,
$Q_n=\sum_{i=0}^{n}\gamma_i x^i$. Moreover, we can pick $p_n \propto 1/p^n$ for
any $p>1$. So the estimator becomes the following: we sample a nonnegative
integer $N$ according to $\Pr(N=n)=(p-1)/p^{n+1}$, sample $X$ independently $N$
times to get $x_1,x_2,\ldots,x_N$, and return $\theta = \gamma_N
\frac{p^{N+1}}{p-1}x_1x_2\cdots x_N$ where we set $\theta =
\tfrac{p}{p-1}\gamma_0$ if $N=0$.\footnote{ Admittedly, the event $N=0$ should
  receive zero probability, as it amounts to ``skipping'' the sampling
  altogether. However, setting $\Pr(N=0)=0$ appears to improve the
  bound in this paper only in the smaller order terms, while making the
  analysis in the paper more complicated.  } We have the following:
\begin{lemma}\label{lem:unb_est}
  For the above estimator, it holds that $\E[\theta]=f(\E[X])$. The
  expected number of samples used by the estimator is $1/(p-1)$, and the
  probability of it being at least $z$ is $p^{-z}$. Moreover, if
  we assume that $f_{+}(x)=\sum_{n=0}^{\infty} |\gamma_n|x^n$
  exists for any $x$ in the domain of interest, then
\[
\E[\theta^2] \leq \frac{p}{p-1}f_{+}^2\left(\sqrt{p\E[X^2]}\right).
\]
\end{lemma}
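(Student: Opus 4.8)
The plan is to dispatch the four assertions by direct computation, conditioning on the random query count $N$ (whose law is the explicit geometric $\Pr(N=n)=(p-1)/p^{n+1}$, $n\ge 0$) and then averaging. Throughout, the estimator in closed form is $\theta=\tfrac{1}{p_N}\gamma_N\,x_1x_2\cdots x_N$, the empty-product convention making the $N=0$ case $\theta=\tfrac{p}{p-1}\gamma_0$ automatic; this is exactly the specialization of the general telescoping construction stated above the lemma, with $Q_n$ the degree-$n$ Taylor polynomial of $f$, so I only need to redo the computation carefully enough to track the convergence hypotheses and the second moment. I would start with the two easy facts about the number of queries, which depend only on the law of $N$: summing the geometric tail gives $\Pr(N\ge z)=\sum_{n\ge z}(p-1)p^{-(n+1)}=p^{-z}$, and hence $\E[N]=\sum_{z\ge 1}\Pr(N\ge z)=\sum_{z\ge 1}p^{-z}=1/(p-1)$.

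Next, for unbiasedness I would condition on $N=n$. Conditionally on $N=n$ the samples $x_1,\dots,x_n$ are i.i.d.\ copies of $X$, so independence gives $\E[x_1\cdots x_n\mid N=n]=(\E[X])^n$ and therefore $\E[\theta\mid N=n]=\tfrac{1}{p_n}\gamma_n(\E[X])^n$. Averaging over $N$ cancels the factor $p_n$ and leaves $\sum_{n=0}^\infty\gamma_n(\E[X])^n=f(\E[X])$. The only step needing justification is the exchange of $\E_N$ with the infinite Taylor series, and this is precisely where the hypothesis that $f_{+}(x)=\sum_n|\gamma_n|x^n$ exists on the domain of interest is used: it yields absolute convergence of $\sum_n|\gamma_n|\,\E|x_1\cdots x_n|\le\sum_n|\gamma_n|(\E[X^2])^{n/2}=f_{+}\!\big(\sqrt{\E[X^2]}\big)<\infty$ (using $\E|X|\le\sqrt{\E[X^2]}$), so Fubini/dominated convergence applies.

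For the second-moment bound I would again condition on $N=n$ and use independence to get $\E\big[(x_1\cdots x_n)^2\mid N=n\big]=(\E[X^2])^n$, hence $\E[\theta^2\mid N=n]=\tfrac{1}{p_n^2}\gamma_n^2(\E[X^2])^n$. Averaging over $N$ leaves a single factor $1/p_n=p^{n+1}/(p-1)$, so
\[
\E[\theta^2]=\frac{p}{p-1}\sum_{n=0}^\infty\gamma_n^2\,\big(p\,\E[X^2]\big)^n .
\]
To conclude, I would expand $f_{+}^2\!\big(\sqrt{p\,\E[X^2]}\big)=\big(\sum_n|\gamma_n|(p\,\E[X^2])^{n/2}\big)^2=\sum_{n,m}|\gamma_n||\gamma_m|(p\,\E[X^2])^{(n+m)/2}$ and note that the diagonal terms $n=m$ already sum to $\sum_n\gamma_n^2(p\,\E[X^2])^n$ while every off-diagonal term is nonnegative; hence $\E[\theta^2]\le\tfrac{p}{p-1}f_{+}^2\!\big(\sqrt{p\,\E[X^2]}\big)$, which incidentally also certifies that the series for $\E[\theta^2]$ converges under the stated hypothesis.

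Everything here is elementary arithmetic with geometric sums; the one genuinely delicate point — and the only place where the analyticity/$f_{+}$ assumption is actually needed — is the interchange of expectation with the infinite summation in the unbiasedness step (and, implicitly, the finiteness of $\E[\theta^2]$). The remaining nuisance is purely bookkeeping: keeping the $N=0$ term consistent with the empty-product convention in each of the three computations.
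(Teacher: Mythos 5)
Your proposal is correct and follows essentially the same route as the paper's proof: conditioning on $N$, using independence of the $x_i$ to collapse the product, cancelling $p_n$ against the Taylor coefficients, and bounding the resulting sum of squares by the square of the sum of $|\gamma_n|\bigl(\sqrt{p\,\E[X^2]}\bigr)^n$. The only difference is that you spell out the unbiasedness computation and the Fubini justification explicitly, whereas the paper defers these to the telescoping discussion preceding the lemma; this is a matter of detail, not of method.
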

\begin{proof}
  The fact that $\E[\theta]=f(\E[X])$ follows from the discussion above. The
  results about the number of samples follow directly from properties of the
  geometric distribution. As for the second moment, $\E[\theta^2]$ equals
\begin{align*}
&\E_{N,x_1,\ldots,x_N}
\left[\gamma_N^2\frac{p^{2(N+1)}}{(p-1)^2}x_1^2x_2^2\cdots x_N^2\right]
~=~
\sum_{n=0}^{\infty}\frac{(p-1)p^{2(n+1)}}{(p-1)^2p^{n+1}}
\gamma_n^2\E_{x_1,\ldots,x_n}\left[x_1^2x_2^2\cdots x_n^2\right]\\
&= \frac{p}{p-1}\sum_{n=0}^{\infty}\gamma_n^2p^{n}\left(\E[X^2]\right)^n
~=~ \frac{p}{p-1}\sum_{n=0}^{\infty}
\left(|\gamma_n|\left(\sqrt{p\E[X^2]}\right)^n\right)^2\\
&\leq \frac{p}{p-1}
\left(\sum_{n=0}^{\infty}|\gamma_n|\left(\sqrt{p\E[X^2]}\right)^n\right)^2
~=~ \frac{p}{p-1}f_{+}^2\left(\sqrt{p\E[X^2]}\right).
\end{align*}
\end{proof}
The parameter $p$ provides a \emph{tradeoff} between the variance of the
estimator and the number of samples needed: the larger is $p$, the less samples
do we need, but the estimator has more variance. In any case, the sample size
distribution decays exponentially fast, so the sample size is essentially
bounded.

It should be emphasized that the estimator associated with 
\lemref{lem:unb_est} is tailored for generality, and is suboptimal in some 
cases. For example, if $f$ is a polynomial function, then $\gamma_n=0$ for 
sufficiently large $n$, and there is no reason to sample $N$ from a 
distribution supported on all nonnegative integers - it just increases the 
variance. Nevertheless, in order to keep the presentation unified and general, 
we will always use this type of estimator. If needed, the estimator can always 
be optimized for specific cases.

We also note that this technique can be improved in various directions, if 
more is known about the distribution of $X$. For instance, if we have some 
estimate of the expectation and variance of $X$, then we can perform a Taylor 
expansion around the estimated $\E[X]$ rather than $0$, and tune the 
probability distribution of $N$ to be different than the one we used above. 
These modifications can allow us to make the variance of the estimator 
arbitrarily small, if the variance of $X$ is small enough. Moreover, one can 
take polynomial approximations to $f$ which are perhaps better than truncated 
Taylor expansions. In this paper, for simplicity, we will ignore these 
potential improvements.

Finally, we note that a related result in~\cite{BhandariBose1990} implies that
it is impossible to estimate $f(\E[X])$ in an unbiased manner when $f$ is
discontinuous, even if we allow a number of queries and estimator values which
are infinite in expectation. Therefore, since the derivative of the hinge
loss is not continuous, estimating in an unbiased manner the gradient of the
hinge loss with arbitrary noise appears to be impossible. Thus, if online
learning with noise and hinge loss is at all feasible, a rather different
approach than ours will need to be taken.

\subsection{Unbiasing Noise in the RKHS}\label{subsec:featuremapping}
The third component of our approach involves the unbiased estimation of 
$\Psi(\bx_t)$, when we only have unbiased noisy copies of $\bx_t$. Here again, 
we have a non-trivial problem, because the feature mapping $\Psi$ is usually 
highly non-linear, so $\E[\Psi(\tilde{\bx}_t)]\neq \Psi(\E[\tilde{\bx}_t])$ in 
general. Moreover, $\Psi$ is not a scalar function, so the technique of 
\subsecref{subsec:indian} will not work as-is.

To tackle this problem, we construct an explicit feature mapping, which needs 
to be tailored to the kernel we want to use. To give a very simple example, 
suppose we use the homogeneous 2nd-degree polynomial kernel, 
$k(\mathbf{r},\mathbf{s})=(\inner{\mathbf{r},\mathbf{s}})^2$. It is not hard 
to verify that the function $\Psi:\reals^d\mapsto \reals^{d^2}$, defined via 
$\Psi(\bx)=(x_1 x_1,x_1 x_2,\ldots,x_d x_d)$, is an explicit feature mapping 
for this kernel. Now, if we query two independent noisy copies 
$\tilde{\bx},\tilde{\bx}'$ of $\bx$, we have that the expectation of the 
random vector $(\tilde{x}_1 \tilde{x}'_1,\tilde{x}_1 \tilde{x}'_2,\ldots, 
\tilde{x}_d \tilde{x}'_d)$ is nothing more than $\Psi(\bx)$. Thus, we can 
construct unbiased estimates of $\Psi(\bx)$ in the RKHS. Of course, this 
example pertains to a very simple RKHS with a finite dimensional 
representation. By a randomization trick somewhat similar to the one in 
\subsecref{subsec:indian}, we can adapt this approach to infinite dimensional 
RKHS as well. In a nutshell, we represent $\Psi(\bx)$ as an 
infinite-dimensional vector, and its noisy unbiased estimate is a vector which 
is non-zero on only finitely many entries, using finitely many noisy queries. 
Moreover, inner products between these estimates can be done efficiently, 
allowing us to implement the learning algorithms, and use the resulting 
predictor on test instances.

\section{Main Results}\label{sec:results}

\subsection{Algorithm}\label{subsec:main_algorithm}
We present our algorithmic approach in a modular form. We start by introducing
the main algorithm, which contains several subroutines. Then we prove our two
main results, which bound the regret of the algorithm, the number of queries to
the oracle, and the running time for two types of kernels: dot product and 
Gaussian (our results can be extended to other kernel types as well). In 
itself, the algorithm is nothing more than a standard online gradient descent 
algorithm with a standard $O(\sqrt{T})$ regret bound. Thus, most of the proofs 
are devoted to a detailed discussion of how the subroutines are implemented 
(including explicit pseudo-code). In this section, we just describe one 
subroutine, based on the techniques discussed in \secref{sec:techniques}. The 
other subroutines require a more detailed and technical discussion, and thus 
their implementation is described as part of the proofs in 
\secref{sec:proofs}. In any case, the intuition behind the implementations and 
the techniques used are described in \secref{sec:techniques}.

For simplicity, we will focus on a finite-horizon setting, where the number of 
online rounds $T$ is fixed and known to the learner. The algorithm can easily 
be modified to deal with the infinite horizon setting, where the learner needs 
to achieve sub-linear regret for all $T$ simultaneously. Also, for the 
remainder of this subsection, we assume for simplicity that $\ell$ is a 
classification loss, namely can be written as a function of 
$\ell(y\inner{\bw,\Psi(\bx)})$. It is not hard to adapt the results below to 
the case where $\ell$ is a regression loss (where $\ell$ is a function of 
$\inner{\bw,\Psi(\bx)}-y$).

We note that at each round, the algorithm below constructs an object which we
denote as $\tilde{\Psi}(\bx_t)$. This object has two interpretations here:
formally, it is an element of a reproducing kernel Hilbert space (RKHS)
corresponding to the kernel we use, and is equal in expectation to
$\Psi(\bx_t)$. However, in terms of implementation, it is simply a data
structure consisting of a finite set of vectors from $\reals^d$. Thus, it can be
efficiently stored in memory and handled even for infinite-dimensional RKHS.

\begin{Algorithm}
 \caption{Kernel Learning Algorithm with Noisy Input} \label{algo:kernel}
    \begin{Balgorithm}
      \texttt{Parameters:} Learning rate $\eta>0$, number of rounds $T$, sample parameter $p>1$.\\
      \texttt{Initialize:} \+\\
      $\alpha_{i}=0$ for all $i=1,\ldots,T$.\\
      $\tilde{\Psi}(\bx_i)$ for all $i=1,\ldots,T$ \+\\
      //~$\tilde{\Psi}(\bx_i)$ is a data structure which can store a variable number of vectors in $\reals^d$\-\-\\
      \texttt{For $t=1\ldots T$}\+\\
      Define $\bw_t = \sum_{i=1}^{t-1}\alpha_i\tilde{\Psi}(\bx_i)$\\
      Receive $A_t,y_t$\hspace{3.7 cm} //~The oracle $A_t$ provides noisy estimates of $\bx_t$\\
      Let $\tilde{\Psi}(\bx_t):=\texttt{Map\_Estimate}(A_t,p)$~~~~~
      //~Get unbiased estimate of $\Psi(\bx_t)$ in the RKHS\\
      Let $\tilde{g}_t:=\texttt{Grad\_Length\_Estimate}(A_t,y_t,p)$~ //~Get
      unbiased estimate of
      $\ell'(y_t\inner{\bw_t,\Psi(\bx_t)})$\\
      Let $\alpha_{t}:=-\tilde{g}_t\eta/\sqrt{T}$\hspace{2.9 cm}//~Perform gradient step\\
      Let $\tilde{n}_t:=\sum_{i=1}^{t}\sum_{j=1}^{t}\alpha_{t,i} \alpha_{t,j}
      \texttt{Prod}(\tilde{\Psi}(\bx_i),\tilde{\Psi}(\bx_j))$~\+\+\\
      //~Compute squared norm, where $\texttt{Prod}(\tilde{\Psi}(\bx_i),\tilde{\Psi}(\bx_j))$ returns $\inner{\tilde{\Psi}(\bx_i),\tilde{\Psi}(\bx_j)}$\-\-\\
      If $\tilde{n}_t> B_{w}$\hspace{4.2 cm}//~If norm squared is larger than $B_{w}$, then project \+\\
      Let $\alpha_{i}:=\alpha_{i}\frac{\sqrt{B_w}}{\tilde{n}_t}$
      for all $i=1,\ldots,t$\\
    \end{Balgorithm}
\end{Algorithm}

Like $\tilde{\Psi}(\bx_t)$, $\bw_{t+1}$ has also two interpretations: formally, it
is an element in the RKHS, as defined in the pseudocode. In terms of
implementation, it is defined via the data structures
$\tilde{\Psi}(\bx_1),\ldots,\tilde{\Psi}(\bx_{t})$ and the values of
$\alpha_1,\ldots,\alpha_t$ at round $t$. To apply this hypothesis on a given
instance $\bx$, we compute
$\sum_{i=1}^{t}\alpha_{t,i}\texttt{Prod}(\tilde{\Psi}(\bx_i),\bx')$, where
$\texttt{Prod}(\tilde{\Psi}(\bx_i),\bx')$ is a subroutine which returns
$\inner{\tilde{\Psi}(\bx_i),\Psi(\bx')}$ (a pseudocode is provided as part of the proofs later on).

We now turn to the main results pertaining to the algorithm. The first result 
shows what regret bound is achievable by the algorithm for any dot-product 
kernel, as well as characterize the number of oracle queries per instance, and 
the overall running time of the algorithm. 
\begin{theorem}\label{thm:main_inner_product}
Assume that the loss function $\ell$ has an analytic derivative
$\ell'(a)=\sum_{n=0}^{\infty}\gamma_n a^n$ for all $a$ in its
domain, and let $\ell'_{+}(a)=\sum_{n=0}^{\infty}|\gamma_n| a^n$ (assuming it
exists). Assume also that the kernel $k(\bx,\bx')$ can be written as
$Q(\inner{\bx,\bx'})$ for all $\bx,\bx'\in \reals^d$.
Finally, assume that $\E[\norm{\tilde{\bx}_t}^2]\leq 
B_{\tilde{\bx}}$ for any $\tilde{\bx}_t$ returned by the oracle at round $t$, 
for all $t=1,\ldots,T$.
Then, for all $B_{\bw}>0$ and $p>1$, it is possible to implement the
subroutines of \algoref{algo:kernel} such that:
\begin{itemize}
\item The expected number of queries to each oracle $A_t$ is $\frac{p}{(p-1)^2}$.
\item The expected running time of the algorithm is
$O\left(T^3\left(1+\frac{dp}{(p-1)^2}\right)\right)$.
\item If we run \algoref{algo:kernel} with
  $\eta=B_{\bw}\big/\sqrt{u}\ell'_{+}\bigl(\sqrt{(p-1)u}\bigr)$,
  where $u = B_{\bw}\left(\frac{p}{p-1}\right)^2Q(pB_{\tilde{\bx}})$, then
\[
\E\left[\sum_{t=1}^T \ell(y_t\inner{\bw_t,\Psi(\bx_t)}) - 
\min_{\bw \,:\, \norm{\bw}^2\leq B_{\bw}}
\sum_{t=1}^T \ell(y_t\inner{\bw,\Psi(\bx_t)})\right] 
~\leq~ 
\ell'_{+}\bigl(\sqrt{(p-1)u}\bigr)\sqrt{uT}~.
\]
\end{itemize}
The expectations are with respect to the randomness of the oracles and the
algorithm throughout its run.
\end{theorem}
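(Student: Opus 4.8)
My plan is to implement the three subroutines \texttt{Map\_Estimate}, \texttt{Grad\_Length\_Estimate}, \texttt{Prod} of \algoref{algo:kernel} for a dot-product kernel by combining an explicit tensor-power feature map with the degree-sampling trick of \subsecref{subsec:indian}, and then to read off the regret bound from the standard (``folklore'') analysis of online gradient descent driven by unbiased, bounded-variance gradient estimates. Write $Q(z)=\sum_{n\ge 0}\beta_n z^n$; since $k(\bx,\bx')=Q(\inner{\bx,\bx'})$ is positive semidefinite, the classical characterization of dot-product kernels (see \cite{ScholkopfSm02,SteinwartChri08}) gives $\beta_n\ge 0$ for all $n$, so $\Psi(\bx):=\bigl(\sqrt{\beta_n}\,\bx^{\otimes n}\bigr)_{n\ge 0}$ (with $\bx^{\otimes n}$ the $n$-fold tensor power) is a genuine feature map: $\inner{\Psi(\bx),\Psi(\bx')}=\sum_n\beta_n\inner{\bx,\bx'}^n=k(\bx,\bx')$. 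I implement \texttt{Map\_Estimate}$(A_t,p)$ block-by-block exactly as in \subsecref{subsec:indian}: draw $N$ with $\Pr(N=n)=(p-1)/p^{n+1}$, query $A_t$ for $N$ independent noisy copies $\tilde{\bx}^{(1)},\dots,\tilde{\bx}^{(N)}$ of $\bx_t$, and return the RKHS element supported on block $N$ alone with value $\sqrt{\beta_N}\tfrac{p^{N+1}}{p-1}\,\tilde{\bx}^{(1)}\otimes\cdots\otimes\tilde{\bx}^{(N)}$. Independence of the copies gives $\E[\tilde{\Psi}(\bx_t)]=\Psi(\bx_t)$, and the computation of \lemref{lem:unb_est} (now with $\|\tilde{\bx}^{(1)}\otimes\cdots\otimes\tilde{\bx}^{(n)}\|^2=\prod_l\|\tilde{\bx}^{(l)}\|^2$ and $\beta_n\ge 0$) gives $\E[\|\tilde{\Psi}(\bx_t)\|^2]\le\tfrac{p}{p-1}\sum_n\beta_n(pB_{\tilde{\bx}})^n=\tfrac{p}{p-1}Q(pB_{\tilde{\bx}})$. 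The data structure just stores the $N$ queried vectors, and \texttt{Prod} is the tensor-product identity: $\inner{\tilde{\Psi}(\bx_i),\tilde{\Psi}(\bx_j)}$ is $0$ unless the two elements share a block $n$, in which case it equals $\beta_n\tfrac{p^{2(n+1)}}{(p-1)^2}\prod_l\inner{\tilde{\bx}_i^{(l)},\tilde{\bx}_j^{(l)}}$ (and $\inner{\tilde{\Psi}(\bx_i),\Psi(\bx')}=\beta_{n_i}\tfrac{p^{n_i+1}}{p-1}\prod_l\inner{\tilde{\bx}_i^{(l)},\bx'}$ for applying the hypothesis). In particular $\tilde n_t$ is the \emph{exact} squared norm of $\bw_{t+1}$, so the (Euclidean, hence $1$-Lipschitz) projection step keeps $\|\bw_t\|^2\le B_{\bw}$ for every $t$.

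For \texttt{Grad\_Length\_Estimate}$(A_t,y_t,p)$, since each stored $\tilde{\Psi}(\bx_i)$ lies in a single block $n_i$, one has $\inner{\bw_t,\Psi(\bx_t)}=\sum_k\beta_k\tfrac{p^{k+1}}{p-1}\sum_{i:\,n_i=k}\alpha_i\prod_{l=1}^k\inner{\tilde{\bx}_i^{(l)},\bx_t}$; I estimate this unbiasedly with a \emph{single} further degree draw, producing $\chi:=y_t\,\beta_K\tfrac{p^{2(K+1)}}{(p-1)^2}\sum_{i:\,n_i=K}\alpha_i\prod_{l=1}^K\inner{\tilde{\bx}_i^{(l)},\tilde{\bx}_t^{(l)}}$ where $K$ is geometric and $\tilde{\bx}_t^{(1)},\dots,\tilde{\bx}_t^{(K)}$ are fresh copies of $\bx_t$, so that $\E[\chi\mid\bw_t]=y_t\inner{\bw_t,\Psi(\bx_t)}$. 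Writing the inner sum as $\inner{V_K,\tilde{\bx}_t^{(1)}\otimes\cdots\otimes\tilde{\bx}_t^{(K)}}$ for a fixed tensor $V_K$ with $\|V_K\|^2=\tfrac{(p-1)^2}{\beta_K p^{2(K+1)}}\|(\bw_t)_K\|^2$, Cauchy--Schwarz together with $\E\|\tilde{\bx}_t^{(1)}\otimes\cdots\otimes\tilde{\bx}_t^{(K)}\|^2\le B_{\tilde{\bx}}^K$ yields $\E[\chi^2\mid\bw_t]\le\tfrac{p}{p-1}\sum_k\beta_k(pB_{\tilde{\bx}})^k\|(\bw_t)_k\|^2\le\tfrac{p}{p-1}B_{\bw}Q(pB_{\tilde{\bx}})$, the last step using $\sum_k\|(\bw_t)_k\|^2=\|\bw_t\|^2\le B_{\bw}$ and (nonnegativity again) $\beta_k(pB_{\tilde{\bx}})^k\le Q(pB_{\tilde{\bx}})$ for each $k$. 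I then apply the scalar estimator of \lemref{lem:unb_est} with $f=\ell'$ to i.i.d.\ copies $\chi_1,\dots,\chi_M$ of $\chi$, returning $\tilde g_t=\gamma_M\tfrac{p^{M+1}}{p-1}\chi_1\cdots\chi_M$. Putting $L:=\ell'_{+}\bigl(\sqrt{(p-1)u}\bigr)$, \lemref{lem:unb_est} gives (conditionally on $\bw_t$) $\E[\tilde g_t\mid\bw_t]=y_t\,\ell'(y_t\inner{\bw_t,\Psi(\bx_t)})$ and, since $u=B_{\bw}\bigl(\tfrac{p}{p-1}\bigr)^2Q(pB_{\tilde{\bx}})$ makes $p\,\E[\chi^2\mid\bw_t]\le(p-1)u$, also $\E[\tilde g_t^2\mid\bw_t]\le\tfrac{p}{p-1}L^2$ by monotonicity of $\ell'_+$. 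The per-round oracle budget is $N+\sum_{j\le M}K_j$, of expectation $\tfrac{1}{p-1}+\E[M]\,\E[K]=\tfrac{1}{p-1}+\tfrac{1}{(p-1)^2}=\tfrac{p}{(p-1)^2}$ by Wald; the running time follows by charging the $O(t^2)$ inner-product pairs in computing $\tilde n_t$ and the $O(t)$ summands inside each $\chi_j$, each costing $O(d)$ arithmetic per unit of a random degree of expectation $O(1/(p-1))$, then summing $\sum_{t\le T}t^2=O(T^3)$.

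To assemble the regret bound, observe that $\tilde{\Psi}(\bx_t)$ and $\tilde g_t$ are built from disjoint sets of oracle calls, so conditionally on $\bw_t$ (which depends only on rounds $<t$) they are independent; hence the vector $\tilde\nabla_t:=\tilde g_t\,\tilde{\Psi}(\bx_t)$ that the update $\alpha_t=-\tilde g_t\eta/\sqrt{T}$ adds to $\bw_t$ (up to the harmless factor $y_t\in\{-1,+1\}$) satisfies $\E[\tilde\nabla_t\mid\bw_t]=y_t\,\ell'(y_t\inner{\bw_t,\Psi(\bx_t)})\,\Psi(\bx_t)=\nabla_t$, the true subgradient of $\bw\mapsto\ell(y_t\inner{\bw,\Psi(\bx_t)})$ at $\bw_t$, and $\E[\|\tilde\nabla_t\|^2\mid\bw_t]=\E[\tilde g_t^2\mid\bw_t]\,\E[\|\tilde{\Psi}(\bx_t)\|^2]\le\bigl(\tfrac{p}{p-1}\bigr)^2Q(pB_{\tilde{\bx}})\,L^2=\tfrac{u}{B_{\bw}}L^2=:G^2$. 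Thus \algoref{algo:kernel} is precisely projected online gradient descent over $\Wcal=\{\bw:\|\bw\|^2\le B_{\bw}\}$ with step size $\eta/\sqrt{T}$ and unbiased gradient estimates of second moment at most $G^2$; the folklore bound — convexity gives $\inner{\nabla_t,\bw_t-\bw^\star}\ge\ell(y_t\inner{\bw_t,\Psi(\bx_t)})-\ell(y_t\inner{\bw^\star,\Psi(\bx_t)})$ for any fixed comparator $\bw^\star\in\Wcal$, telescoping $\|\bw_{t+1}-\bw^\star\|^2\le\|\bw_t-\tfrac{\eta}{\sqrt{T}}\tilde\nabla_t-\bw^\star\|^2$ via non-expansiveness of the projection, then taking expectations and using $\bw_1=0$ — gives $\E[\text{regret}]\le\tfrac{B_{\bw}\sqrt{T}}{2\eta}+\tfrac{\eta\sqrt{T}}{2}G^2$, and the stated choice $\eta=B_{\bw}\big/\sqrt{u}\,L$ makes the two terms equal, each $\tfrac12 L\sqrt{uT}$, for a total of $L\sqrt{uT}=\ell'_{+}\bigl(\sqrt{(p-1)u}\bigr)\sqrt{uT}$, as claimed.

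I expect the main obstacle to be the chain of second-moment estimates. Getting from $\chi$ to $\E[\chi^2\mid\bw_t]$ requires rephrasing the estimate as a single tensor inner product so that Cauchy--Schwarz applies, and then feeding that bound through \lemref{lem:unb_est} with $f=\ell'$; for the final expression to collapse to $\tfrac{u}{B_{\bw}}L^2$ — and hence for the regret to come out as the clean $\ell'_{+}(\sqrt{(p-1)u})\sqrt{uT}$ with the stated $\eta$ — every prefactor of the form $p^{\pm(k+1)}/(p-1)$ must cancel exactly, so tracking those constants, together with pinning down the conditional-independence facts that let $\E[\|\tilde\nabla_t\|^2]$ factor and let $\chi_1,\dots,\chi_M$ be treated as i.i.d.\ given $\bw_t$, is where the real care lies. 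Unbiasedness of all the estimators, the online-gradient-descent inequality itself, and the query-count and running-time accounting are then routine.
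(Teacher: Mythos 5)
Your proposal is correct and follows essentially the same route as the paper: the same explicit Taylor/tensor-power feature map with geometric degree sampling for \texttt{Map\_Estimate} and \texttt{Prod} (matching Lemmas~\ref{lem:tilde_unbiased} and~\ref{lem:prod_correct}), the same two-level use of the \lemref{lem:unb_est} estimator for $\tilde g_t$ with the identical second-moment bounds, the same Wald-type query count giving $p/(p-1)^2$, and the same stochastic-OGD regret argument (which the paper factors out as \thmref{thm:kernel_main} rather than inlining). The only differences are notational (tensor products versus explicit multi-indices) and the slightly more refined per-block Cauchy--Schwarz bound on $\E[\chi^2]$, which collapses to the paper's constant anyway.
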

We note that the distribution of the number of oracle queries can be specified
explicitly, and it decays very rapidly - see the proof for details. Also, for
simplicity, we only bound the expected regret in the theorem above. If the
noise is bounded almost surely or with sub-Gaussian tails (rather than just
bounded variance), then it is possible to obtain similar guarantees with high
probability, by relying on Azuma's inequality or variants thereof (see for
example~\cite{CesaBianchiCoGe04}).

We now turn to the case of Gaussian kernels.
\begin{theorem}\label{thm:main_Gaussian}
Assume that the loss function $\ell$ has an analytic derivative
$\ell'(a)=\sum_{n=0}^{\infty}\gamma_n a^n$ for all $a$ in its
domain, and let $\ell'_{+}(a)=\sum_{n=0}^{\infty}|\gamma_n| a^n$ (assuming it
exists). Assume that the kernel $k(\bx,\bx')$ is defined as
$\exp(-\norm{\bx-\bx}^2/\sigma^2)$. Finally, assume that
$\E[\norm{\tilde{\bx}_t}^2]\leq B_{\tilde{\bx}}$ for any $\tilde{\bx}_t$
returned by the oracle at round $t$, for all $t=1,\ldots,T$.
Then for all $B_{\bw}>0$ and $p>1$ it is possible to implement the
subroutines of \algoref{algo:kernel} such that 
\begin{itemize}
\item The expected number of queries to each oracle $A_t$ is $\frac{3p}{(p-1)^2}$.
\item The expected running time of the algorithm is
  $O\left(T^3\left(1+\frac{dp}{(p-1)^2}\right)\right)$.
\item If we run \algoref{algo:kernel} with
$\eta=B_{\bw}\big/\sqrt{u}\ell'_{+}\bigl(\sqrt{(p-1)u}\bigr)$,
where
\[
u = B_{\bw}\left(\frac{p}{p-1}\right)^3
\exp\left(\frac{\sqrt{p}B_{\tilde{\bx}}+2p\sqrt{B_{\tilde{\bx}}}}{\sigma^2}\right)
\]
then
\[
\E\left[\sum_{t=1}^T \ell(y_t\inner{\bw_t,\Psi(\bx_t)}) - 
\min_{\bw \,:\, \norm{\bw}^2\leq B_{\bw}}
\sum_{t=1}^T \ell(y_t\inner{\bw,\Psi(\bx_t)})\right] 
~\leq~ 
\ell'_{+}(\sqrt{(p-1)u})\sqrt{uT}~.
\]
\end{itemize}
The expectations are with respect to the randomness of the oracles and the
algorithm throughout its run.
\end{theorem}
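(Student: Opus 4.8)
\algoref{algo:kernel} is nothing but projected online gradient descent with step size $\eta/\sqrt{T}$, run on the random vector $\hat{g}_t=\tilde{g}_t\,y_t\,\tilde{\Psi}(\bx_t)$ in place of the true loss gradient at $\bw_t$. So the proof breaks into three parts: (i) implement \texttt{Map\_Estimate}, \texttt{Grad\_Length\_Estimate} and \texttt{Prod} for the Gaussian kernel so that $\tilde{\Psi}(\bx_t)$ is an unbiased estimate of $\Psi(\bx_t)$ and $\tilde{g}_t$ an unbiased estimate of $\ell'(y_t\inner{\bw_t,\Psi(\bx_t)})$, both conditionally on the history $\mathcal{F}_{t-1}$ of rounds $1,\dots,t-1$; (ii) feed this into the standard online gradient descent regret inequality of \cite{Zinkevich03} and use conditional unbiasedness plus convexity of $\ell$ to turn it into a bound on the quantity in the statement; (iii) bound the per-round second moment $\E[\norm{\hat{g}_t}^2]$ via \lemref{lem:unb_est}, then optimize over $\eta$. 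The query count and running time fall out of the construction in (i).

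\textbf{An explicit estimator for the Gaussian feature map.} The starting point is the factorization
\[
k(\bx,\bx')\;=\;e^{-\norm{\bx-\bx'}^2/\sigma^2}\;=\;e^{-\norm{\bx}^2/\sigma^2}\;e^{-\norm{\bx'}^2/\sigma^2}\;e^{2\inner{\bx,\bx'}/\sigma^2}.
\]
The last factor is a dot-product kernel $Q_0(\inner{\bx,\bx'})$ with $Q_0(z)=\sum_{n\ge 0}\tfrac{(2/\sigma^2)^n}{n!}z^n$, so exactly as in the proof of \thmref{thm:main_inner_product} it admits an explicit feature map $\Phi$ whose level-$n$ block holds the scaled rank-one tensors $\sqrt{(2/\sigma^2)^n/n!}\,x_{i_1}\!\cdots x_{i_n}$; an unbiased estimate $\hat{\Phi}(\bx_t)$ is obtained by drawing $N\sim\Pr(N=n)=(p-1)/p^{n+1}$, querying $A_t$ for $N$ fresh noisy copies $\tilde{\bx}^{(1)},\dots,\tilde{\bx}^{(N)}$, and returning $\tfrac{1}{p_N}\sqrt{(2/\sigma^2)^N/N!}\;\tilde{\bx}^{(1)}\!\otimes\cdots\otimes\tilde{\bx}^{(N)}$, stored simply as the list $(\tilde{\bx}^{(1)},\dots,\tilde{\bx}^{(N)})$ together with a scalar; \texttt{Prod} then computes RKHS inner products by pairing tensor factors (and returning $0$ across different levels). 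The scalar prefactor $e^{-\norm{\bx_t}^2/\sigma^2}$ is nonlinear in $\bx_t$, so we first note $\E[\inner{\tilde{\bx},\tilde{\bx}'}]=\norm{\bx_t}^2$ for two fresh independent noisy copies and then apply the scalar estimator of \lemref{lem:unb_est} to $f(z)=e^{-z/\sigma^2}$ (whose $f_{+}$ is $e^{z/\sigma^2}$), obtaining an unbiased scalar $\hat{s}_t$. Setting $\tilde{\Psi}(\bx_t):=\hat{s}_t\,\hat{\Phi}(\bx_t)$, with the two pieces built from disjoint sets of oracle calls, gives $\E[\tilde{\Psi}(\bx_t)\mid\mathcal{F}_{t-1}]=\Psi(\bx_t)$. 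Finally \texttt{Grad\_Length\_Estimate} applies the scalar estimator of \lemref{lem:unb_est} to the analytic function $\ell'$, using the real random variable $X=y_t\inner{\bw_t,\hat{\Psi}(\bx_t)}$ built from further independent copies of the $\tilde{\Psi}$ construction; since $\bw_t$ is $\mathcal{F}_{t-1}$-measurable, $\E[X\mid\mathcal{F}_{t-1}]=y_t\inner{\bw_t,\Psi(\bx_t)}$, hence $\E[\tilde{g}_t\mid\mathcal{F}_{t-1}]=\ell'(y_t\inner{\bw_t,\Psi(\bx_t)})$, and therefore (using $y_t^2=1$ and independence of the two randomizations) $\E[\hat{g}_t\mid\mathcal{F}_{t-1}]=\nabla_t$, the true loss gradient at $\bw_t$.

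\textbf{Regret, moments, and accounting.} Because \texttt{Prod} returns exact inner products, the quantity $\tilde{n}_t$ is the exact squared norm of the pre-projection hypothesis, so the projection is exact and $\norm{\bw_t}^2\le B_{\bw}$ on every sample path. Hence, with $\bw^\star$ the fixed minimizer over $\{\norm{\bw}^2\le B_{\bw}\}$ (deterministic, as the $(\bx_t,y_t)$ are fixed), the online gradient descent analysis gives, pathwise,
\[
\sum_{t=1}^{T}\inner{\hat{g}_t,\bw_t-\bw^\star}\;\le\;\frac{B_{\bw}\sqrt{T}}{2\eta}+\frac{\eta}{2\sqrt{T}}\sum_{t=1}^{T}\norm{\hat{g}_t}^2,
\]
and taking expectations, then using $\E[\hat{g}_t\mid\mathcal{F}_{t-1}]=\nabla_t$ and convexity of $\ell$, the left side is at least the expected regret in the statement. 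For the right side, $\E[\norm{\hat{g}_t}^2\mid\mathcal{F}_{t-1}]=\E[\tilde{g}_t^2\mid\mathcal{F}_{t-1}]\,\E[\norm{\tilde{\Psi}(\bx_t)}^2\mid\mathcal{F}_{t-1}]$ by independence; \lemref{lem:unb_est} applied to $\hat{s}_t$ and to $\hat{\Phi}(\bx_t)$ (using $\E[\inner{\tilde{\bx},\tilde{\bx}'}^2]\le B_{\tilde{\bx}}^2$ and $\E[\norm{\tilde{\bx}_t}^2]\le B_{\tilde{\bx}}$) bounds $\E[\norm{\tilde{\Psi}(\bx_t)}^2]$ by a constant of the form $\bigl(p/(p-1)\bigr)^2\exp(\cdot/\sigma^2)$, while Cauchy--Schwarz with $\norm{\bw_t}^2\le B_{\bw}$ gives $\E[X^2\mid\mathcal{F}_{t-1}]\le B_{\bw}\,\E[\norm{\hat{\Psi}(\bx_t)}^2]$ and then \lemref{lem:unb_est} applied to $\ell'$ gives $\E[\tilde{g}_t^2\mid\mathcal{F}_{t-1}]\le\tfrac{p}{p-1}\bigl(\ell'_{+}(\sqrt{p\,\E[X^2\mid\mathcal{F}_{t-1}]})\bigr)^2$. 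With $u$ chosen as in the statement one has $p\,\E[X^2\mid\mathcal{F}_{t-1}]\le(p-1)u$, so $\E[\norm{\hat{g}_t}^2]\le\tfrac{u}{B_{\bw}}\bigl(\ell'_{+}(\sqrt{(p-1)u})\bigr)^2$; plugging into the display and choosing $\eta=B_{\bw}\big/\bigl(\sqrt{u}\,\ell'_{+}(\sqrt{(p-1)u})\bigr)$ balances the two terms and yields $\ell'_{+}(\sqrt{(p-1)u})\sqrt{uT}$. For the accounting: $\tilde{\Psi}(\bx_t)$ costs $2M+N$ oracle calls (two copies per sample for $\hat{s}_t$, one per sample for $\hat{\Phi}$) and $\tilde{g}_t$ costs one such construction per sample of its own geometric count, so the expected total is $\tfrac{3}{p-1}+\tfrac{1}{p-1}\cdot\tfrac{3}{p-1}=\tfrac{3p}{(p-1)^2}$; and the running time is dominated by the $O(T^2)$ \texttt{Prod} evaluations per round needed to form $\tilde{n}_t$ (plus $O(T)$ for $\tilde{g}_t$), each costing $O(d\cdot(\text{tensor order}))$, which sums in expectation to $O\bigl(T^3(1+dp/(p-1)^2)\bigr)$.

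\textbf{Where the difficulty lies.} The substantive step is building the feature-map estimator for the Gaussian kernel: the factorization above is what makes it possible, the normalization factor $e^{-\norm{\bx}^2/\sigma^2}$ needs its own separate randomization, and one must track carefully how the sampling parameter $p$ propagates through the two nested uses of \lemref{lem:unb_est} in order to land on exactly the exponential constant in $u$. Everything else --- the regret bound, the moment bounds, the query and time counts --- is bookkeeping on top of Zinkevich's inequality and \lemref{lem:unb_est}; the only real care needed there is keeping the filtration and independence structure precise (in particular that $\bw^\star$ is deterministic and that the oracle calls feeding $\tilde{g}_t$ and $\tilde{\Psi}(\bx_t)$ are mutually independent) so that the pathwise regret inequality may legitimately be averaged against the conditional unbiasedness.
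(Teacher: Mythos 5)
Your proposal is correct and follows essentially the same route as the paper: the same factorization $k(\bx,\bx')=e^{-\norm{\bx}^2/\sigma^2}e^{-\norm{\bx'}^2/\sigma^2}e^{2\inner{\bx,\bx'}/\sigma^2}$, the same two independent geometric randomizations (pairs of noisy copies for the scalar prefactor, single copies for the tensor part — your product form $\hat{s}_t\,\hat{\Phi}(\bx_t)$ is just a modular rewriting of the paper's combined formula in its \texttt{Map\_Estimate}), the same reduction to the Zinkevich-style regret theorem via \lemref{lem:unb_est}, and the same query and runtime accounting. The constants $u$, $\eta$, and the final bound all come out identically.
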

As in \thmref{thm:main_inner_product}, note that the number of oracle queries
has a fast decaying distribution. Also, note that with Gaussian kernels,
$\sigma^2$ is usually chosen to be on the order of the example's squared
norms. Thus, if the noise added to the examples is proportional to their
original norm, we can assume that $B_{\tilde{\bx}}/\sigma^2=O(1)$, and thus $u$
which appears in the bound is also bounded by a constant.

As previously mentioned, most of the subroutines are described in the proofs 
section, as part of the proof of \thmref{thm:main_inner_product}. Here, we only 
show how to implement \texttt{Grad\_Length\_Estimate} subroutine, which 
returns the gradient length estimate $\tilde{g}_t$. The idea is based on the 
technique described in \subsecref{subsec:indian}. We prove that $\tilde{g}_t$ 
is an unbiased estimate of $\ell'(y_t\inner{\bw_t,\Psi(\bx_t)})$, and bound 
$\E[\tilde{g}^2_t]$. As discussed earlier, we assume that $\ell'(\cdot)$ is 
analytic and can be written as $\ell'(a)=\sum_{n=0}^{\infty}\gamma_n a^n$.

\begin{Subroutine}
 \caption{\texttt{Grad\_Length\_Estimate}$(A_t,y_t,p)$} \label{sub:grad}
    \begin{Balgorithm}
      Sample nonnegative integer $n$ according to $\Pr(n)=(p-1)/p^{n+1}$\\
      \texttt{For} $j=1,\ldots,n$\+\\
      Let $\tilde{\Psi}(\bx_t)_j:=\texttt{Map\_Estimate}(A_t)$~~~//~Get unbiased estimate of $\Psi(\bx_t)$ in the RKHS\-\\
      Return
      $\tilde{g}_t:=y_t\gamma_n\frac{p^{n+1}}{p-1}\prod_{j=1}^{n}\left(\sum_{i=1}^{t-1}
        \alpha_{t-1,i}\texttt{Prod}(\tilde{\Psi}(\bx_i),\tilde{\Psi}(\bx_t)_j)\right)$\\
    \end{Balgorithm}
\end{Subroutine}

\begin{lemma}\label{lem:grad_unbiased}
Assume that $\E[\tilde{\Psi}(\bx_t)]=\Psi(\bx_t)$, and that 
\texttt{Prod}$(\tilde{\Psi}(\bx),\tilde{\Psi}(\bx'))$ returns 
$\inner{\tilde{\Psi}(\bx),\tilde{\Psi}(\bx')}$ for all $\bx,\bx'$. 
Then for any given $\bw_t = \alpha_{t-1,1}\tilde{\Psi}(\bx_1)+\cdots+\alpha_{t-1,t-1}\tilde{\Psi}(\bx_{t-1})$ it holds that
\[
\E_t[\tilde{g}_t] = y_t\ell'(y_t\inner{\bw_t,\Psi(\bx_t)})
\qquad\text{and}\qquad
\E_t[\tilde{g}^2_t] \leq 
\frac{p}{p-1}\ell_{+}^{'}\left(\sqrt{pB_{\bw}B_{\tilde{\Psi}(\bx)}}\right)^2
\]
where the expectation is with respect to the randomness of \subref{sub:grad}, 
and $\ell'_{+}(a)=\sum_{n=0}^{\infty}|\gamma_n|a^n$.
\end{lemma}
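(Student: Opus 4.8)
The plan is to condition on the history of the run and then recognize $\tilde g_t$ as an instance of the estimator already analyzed in \lemref{lem:unb_est}. Concretely, I would first fix all randomness generated before round $t$, so that $\bx_t$, $y_t$, and the predictor $\bw_t=\sum_{i=1}^{t-1}\alpha_{t-1,i}\tilde{\Psi}(\bx_i)$ become deterministic; the only fresh randomness in \subref{sub:grad} is then the integer $n$ drawn with $\Pr(n)=(p-1)/p^{n+1}$, together with the outputs $\tilde{\Psi}(\bx_t)_1,\dots,\tilde{\Psi}(\bx_t)_n$ of \texttt{Map\_Estimate}. Since each call to \texttt{Map\_Estimate} uses a fresh, independent batch of oracle queries, these outputs are i.i.d., independent of $n$, and satisfy $\E[\tilde{\Psi}(\bx_t)_j]=\Psi(\bx_t)$ by the first hypothesis.

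Next I would simplify the product. By the assumed behaviour of \texttt{Prod} and bilinearity of the RKHS inner product, the $j$-th factor $\sum_{i=1}^{t-1}\alpha_{t-1,i}\texttt{Prod}(\tilde{\Psi}(\bx_i),\tilde{\Psi}(\bx_t)_j)$ equals $\inner{\bw_t,\tilde{\Psi}(\bx_t)_j}$. Writing $X$ for the scalar random variable $\inner{\bw_t,\tilde{\Psi}(\bx_t)_1}$, with the label $y_t$ folded in so that for a classification loss $\E[X]=y_t\inner{\bw_t,\Psi(\bx_t)}$, the returned value $\tilde g_t=y_t\gamma_n\frac{p^{n+1}}{p-1}\prod_{j=1}^n\inner{\bw_t,\tilde{\Psi}(\bx_t)_j}$ is — up to the routine bookkeeping of where the $y_t$ factors sit, which is harmless since $y_t^2=1$ — exactly the estimator of \lemref{lem:unb_est} applied to the analytic function $f=\ell'$ (whose Taylor coefficients are $\gamma_0,\gamma_1,\dots$) and to $X$, with the same sampling law for $n$. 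The first conclusion of \lemref{lem:unb_est} then gives $\E_t[\tilde g_t]=y_t\ell'(\E[X])=y_t\ell'(y_t\inner{\bw_t,\Psi(\bx_t)})$, which is the first claim.

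For the second moment, the outer $y_t$ is immaterial since $y_t^2=1$, so \lemref{lem:unb_est} yields $\E_t[\tilde g_t^2]\leq\frac{p}{p-1}\ell'_{+}\bigl(\sqrt{p\E[X^2]}\bigr)^2$, and it only remains to bound $\E[X^2]$. Since $\bw_t$ is deterministic after conditioning, Cauchy--Schwarz in the RKHS gives $\E[X^2]=\E\bigl[\inner{\bw_t,\tilde{\Psi}(\bx_t)_1}^2\bigr]\leq\norm{\bw_t}^2\,\E\bigl[\norm{\tilde{\Psi}(\bx_t)_1}^2\bigr]\leq B_{\bw}B_{\tilde{\Psi}(\bx)}$, using that the projection step of \algoref{algo:kernel} ensures $\norm{\bw_t}^2\leq B_{\bw}$ and that the bound $\E\bigl[\norm{\tilde{\Psi}(\bx_t)_1}^2\bigr]\leq B_{\tilde{\Psi}(\bx)}$ on \texttt{Map\_Estimate} is in force. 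Since $\ell'_{+}$ has nonnegative coefficients it is nondecreasing on $[0,\infty)$, so $\ell'_{+}\bigl(\sqrt{p\E[X^2]}\bigr)\leq\ell'_{+}\bigl(\sqrt{pB_{\bw}B_{\tilde{\Psi}(\bx)}}\bigr)$, which gives the stated bound. The one genuinely delicate step is the reduction in the second paragraph: one must check that the $n$ copies $\tilde{\Psi}(\bx_t)_j$ are mutually independent and independent of $n$ (this is exactly what "fresh oracle calls" buys us) and that conditioning truly freezes $\bw_t$, so that $\inner{\bw_t,\cdot}$ is a bona fide deterministic linear functional and the expression literally matches the form analyzed in \lemref{lem:unb_est}; once that is in place, everything else is a direct invocation of \lemref{lem:unb_est} together with Cauchy--Schwarz and monotonicity of $\ell'_{+}$.
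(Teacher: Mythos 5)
Your proposal is correct and follows exactly the paper's own route: fix $\bw_t$, identify each factor with $\inner{\bw_t,\tilde{\Psi}(\bx_t)_j}$ via the \texttt{Prod} assumption, invoke \lemref{lem:unb_est} with $f=\ell'$ and $X=\inner{\bw_t,\tilde{\Psi}(\bx_t)}$, and bound $\E[X^2]\leq\norm{\bw_t}^2\E[\norm{\tilde{\Psi}(\bx_t)}^2]\leq B_{\bw}B_{\tilde{\Psi}(\bx)}$ by Cauchy--Schwarz. You are, if anything, more explicit than the paper about the conditioning, the i.i.d. structure of the \texttt{Map\_Estimate} calls, and the placement of the $y_t$ factors.
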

\begin{proof}
The result follows from \lemref{lem:unb_est}, where $\tilde{g}_t$ corresponds 
to the estimator $\theta$, the function $f$ corresponds to $\ell'$, and the 
random variable $X$ corresponds to $\inner{\bw_t,\tilde{\Psi}(\bx_t)}$ (where 
$\tilde{\Psi}(\bx_t)$ is random and $\bw_t$ is held fixed). The term $\E[X^2]$ 
in \lemref{lem:unb_est} can be upper bounded as
\[
    \E_t\left[\bigl(\inner{\bw_t,\tilde{\Psi}(\bx_t)}\bigr)^2\right]
\leq 
    \norm{\bw_t}^2\,
    \E_t\left[\norm{\tilde{\Psi}(\bx_t)}^2\right]
\leq 
    B_{\bw}B_{\tilde{\Psi}(\bx)}~.
\]
\end{proof}

\subsection{Loss Function Examples}\label{subsec:examples}

Theorems~\ref{thm:main_inner_product} and~\ref{thm:main_Gaussian} both deal 
with generic loss functions $\ell$ whose derivative can be written as 
$\sum_{n=0}^{\infty}\gamma_na^n$, and the regret bounds involve the functions 
$\ell'_{+}(a)=\sum_{n=0}^{\infty}|\gamma_n|a^n$. Below, we present a few 
examples of loss functions and their corresponding $\ell'_{+}$. As mentioned 
earlier, while the theorems in the previous subsection are in terms of 
classification losses (i.e., $\ell$ is a function of 
$y\inner{\bw,\Psi(\bx)}$), virtually identical results can be proven for 
regression losses (i.e., $\ell$ is a function of $\inner{\bw,\Psi(\bx)}-y$), 
so we will give examples from both families. Working out the first two 
examples is straightforward. The proofs of the other two appear in 
\secref{sec:proofs}. The loss functions are illustrated graphically in 
\figref{fig:smoothed_losses}.
\begin{example}
For the squared loss function, 
$\ell(\inner{\bw,\bx},y)=(\inner{\bw,\bx}-y)^2$, we have 
$\ell'_{+}\bigl(\sqrt{(p-1)u)}\bigr)=2\sqrt{(p-1)u}$.
\end{example}
\begin{example}
  For the exponential loss function,
  $\ell(\inner{\bw,\bx},y)=e^{y\inner{\bw,\bx}}$,
  we have $\ell'_{+}\bigl(\sqrt{(p-1)u}\bigr)=e^{\sqrt{(p-1)u}}$.
\end{example}
\begin{example}\label{ex:smooth_abs}
Consider a ``smoothed'' absolute loss function 
$\ell_{\sigma}(\inner{\bw,\Psi(\bx)}-y)$, defined as an antiderivative of 
$\mathrm{Erf}(sa)$ for some $s>0$ (see proof for exact analytic form). Then we 
have that $\ell'_{+}\bigl(\sqrt{(p-1)u}\bigr)\leq 
\frac{1}{2}+\frac{1}{s\sqrt{\pi(p-1)u}}\left(e^{s^2(p-1)u}-1\right)$.
\end{example}
\begin{example}\label{ex:smooth_hinge}
Consider a ``smoothed'' hinge loss $\ell(y\inner{\bw,\Psi(\bx)})$, defined as an 
antiderivative of $(\mathrm{Erf}(s(a-1))-1)/2$ for some $s>0$ (see proof for 
exact analytic form). Then we have that $\ell'_{+}\bigl(\sqrt{(p-1)u}\bigr)\leq 
\frac{2}{s\sqrt{\pi(p-1)u}}\left(e^{s^2(p-1)u-1}\right)$. 
\end{example}
For any $s$, the loss function in the last two examples are convex, and 
respectively approximate the absolute loss $\big|\inner{\bw,\Psi(\bx)}-y\big|$
and the hinge loss $\max\bigl\{0,1-y\inner{\bw,\Psi(\bx)}\bigr\}$ arbitrarily
well for large enough $s$. 
\figref{fig:smoothed_losses} shows these loss functions graphically for $s=1$. 
Note that $s$ need not be large in order to get a good approximation. Also, we 
note that both the loss itself and its gradient are computationally easy to
evaluate.

\begin{figure}
\begin{center}
\includegraphics[scale=0.5]{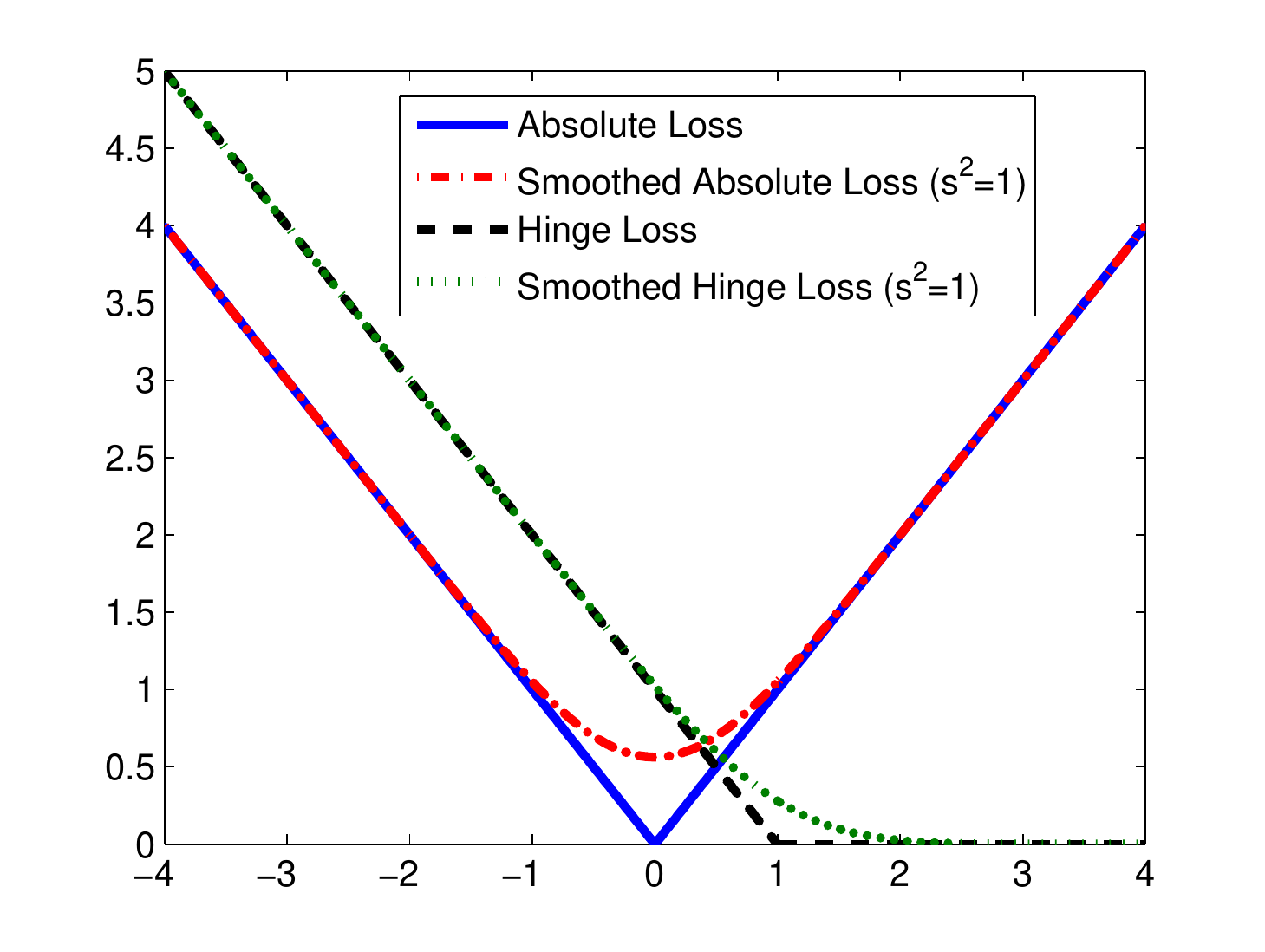}
\end{center}
\vspace{-5mm}
\caption{
Absolute loss, hinge loss, and smooth approximations}\label{fig:smoothed_losses}
\end{figure}

Finally, we remind the reader that as discussed in \subsecref{subsec:indian},
performing an unbiased estimate of the gradient for non-differentiable losses
directly (such as the hinge loss or absolute loss) appears to be impossible in
general. On the flip side, if one is willing to use a random number of queries
with polynomial rather than exponential tails, then one can achieve much better
sample complexity results, by focusing on loss functions (or approximations
thereof) which are only differentiable to a bounded order, rather than fully
analytic. This again demonstrates the tradeoff between the sample size and the
amount of information that needs to be gathered on each training example.

\subsection{One Noisy Copy is Not Enough}
The previous results might lead one to wonder whether it is really necessary to
query the same instance more than once. In some applications this is
inconvenient, and one would prefer a method which works when just a single
noisy copy of each instance is made available. In this subsection we show that,
unfortunately, such a method cannot be found. Specifically, we prove that under
very mild assumptions, no method can achieve sub-linear regret when it has
access to just a single noisy copy of each instance. On the other hand, for the
case of squared loss and linear kernels, our techniques can be adapted to work
with exactly two noisy copies of each instance,\footnote{ In a nutshell, for
  squared loss and linear kernels, we just need to estimate
  $2(\inner{\bw_t,\bx_t}-y_t)\bx_t$ in an unbiased manner at each round $t$.
  This can be done by computing
  $2(\inner{\bw_t,\tilde{\bx}_t}-y_t)\tilde{\bx}'_t$, where
  $\tilde{\bx}_t,\tilde{\bx}'_t$ are two noisy copies of $\bx_t$.  } so without
further assumptions, the lower bound that we prove here is indeed tight. For
simplicity, we prove the result for linear kernels (i.e., where
$k(\bx,\bx')=\inner{\bx,\bx'}$).  It is an interesting open problem to show
improved lower bounds when nonlinear kernels are used. We also note that the
result crucially relies on the learner not knowing the noise distribution, and
we leave to future work the investigation of what happens when this assumption
is relaxed.
\begin{theorem}\label{thm:impossibility2}
  Let $\Wcal$ be a compact convex subset of $\reals^d$, and let
  $\ell(\cdot,1):\reals\mapsto\reals$ satisfies the following:
  (1) it is bounded from below; (2) it is differentiable at $0$
  with $\ell'(0,1)<0$.
  For any learning algorithm which selects hypotheses from $\Wcal$ and
  is allowed access to a single noisy copy of the instance at each round $t$,
  there exists a strategy for the adversary such that the sequence
  $\bw_1,\bw_2,\dots$ of predictors output by the algorithm satisfies
\[
    \limsup_{T\rightarrow \infty} \max_{\bw\in \Wcal}
    \frac{1}{T}\sum_{t=1}^{T}\Bigl( \ell(\inner{\bw_t,\bx_t},y_t)
    - \ell(\inner{\bw,\bx_t},y_t) \Bigr) > 0
\]
with probability 1 with respect to the randomness of the oracles.
\end{theorem}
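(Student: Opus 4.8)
The plan is to exhibit an adversary strategy in which two candidate instance/noise configurations are statistically indistinguishable from the learner's point of view, yet require incompatible predictors to keep per-round loss low. Concretely, I would work in one dimension (embed in $\reals^d$ trivially) and have the adversary commit, in advance, to one of two ``worlds.'' In world $A$ the true instance is $\bx_t=+c$ for all $t$; in world $B$ it is $\bx_t=-c$ for all $t$, where $c>0$ is a small constant to be chosen. In each world the noise $Z_t$ is chosen so that the single noisy observation $\tilde{\bx}_t=\bx_t+Z_t$ has \emph{exactly the same distribution} in both worlds — for instance, take the observation to be $0$ with probability $1-q$ and $\pm(c/q)$ with appropriate probabilities so that the mean is $\pm c$ and the variance is controlled by a constant; matching the two mixtures forces the learner's entire observation sequence to have an identical law in worlds $A$ and $B$. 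Since the learner sees only one copy per round, its sequence of predictors $\bw_1,\bw_2,\dots$ has the same distribution in both worlds.

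Next I would use the assumptions on $\ell$. Because $\ell(\cdot,1)$ is differentiable at $0$ with $\ell'(0,1)<0$, for small enough $c$ the quantity $\ell(\inner{\bw,\bx_t},1)$ as a function of $\bw$ (with $\bx_t=\pm c$) is strictly decreasing (in world $A$) or strictly increasing (in world $B$) in the relevant coordinate of $\bw$ near $0$; combined with boundedness from below and compactness of $\Wcal$, this means: in world $A$ the comparator $\bw$ wants the corresponding coordinate as large as possible within $\Wcal$, while in world $B$ it wants it as small as possible. Set $y_t=1$ throughout. The key deterministic inequality to extract is that there is a fixed gap $\delta>0$ and a coordinate direction such that, for \emph{any} $\bw_t$, the regret contribution $\ell(\inner{\bw_t,\bx_t},1)-\min_{\bw\in\Wcal}\ell(\inner{\bw,\bx_t},1)$ is at least $\delta$ in at least one of the two worlds — because a single $\bw_t$ cannot simultaneously be near the ``large-coordinate'' optimum (good for world $A$) and the ``small-coordinate'' optimum (good for world $B$). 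Here I would use $\ell'(0,1)<0$ quantitatively: pick $c$ small enough that on the segment of $\Wcal$ traversed the loss is monotone with derivative bounded away from $0$, so the per-round suboptimality is linear in the coordinate displacement.

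Finally I would turn this into the almost-sure $\limsup$ statement. Since the predictor sequence has identical law in both worlds, define the event that the time-averaged regret in world $A$ exceeds $\delta/2$ infinitely often, and likewise for world $B$; the previous step shows that at every round at least one of the two instantaneous regrets is $\ge\delta$, so by an averaging/pigeonhole argument along subsequences, on \emph{every} sample path at least one of the two averaged regrets has $\limsup\ge\delta/2$. The adversary then simply picks, before the game, whichever world makes this hold — but more carefully, since the adversary must fix a strategy not depending on the learner's internal randomness, I would instead argue: fix world $A$; either $\limsup_T \frac1T\sum_t(\ell(\inner{\bw_t,\bx_t},1)-\min_\bw \ell(\inner{\bw,\bx_t},1))\ge\delta/2$ with probability $1$ (done), or with positive probability it is $<\delta/2$, in which case by the distributional equivalence the same event has positive probability in world $B$, and on that event world $B$'s averaged regret is forced $\ge\delta/2$ by the pigeonhole bound — contradiction unless one of the worlds already gives the conclusion with probability $1$. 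Choosing that world as the adversary's strategy completes the argument. The main obstacle is the last step: getting a genuinely \emph{probability-one}, \emph{$\limsup$} conclusion (rather than an in-expectation or positive-probability one) out of the indistinguishability, which is why the construction should make the instantaneous lower bound hold \emph{deterministically at every round} in at least one world, so that no concentration argument is needed — only the pigeonhole between the two worlds plus the exact matching of observation laws.
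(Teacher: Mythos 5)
Your proposal has a genuine gap at its very first step: the two ``worlds'' you construct cannot be made observationally indistinguishable. In world $A$ the true instance is $+c$ and in world $B$ it is $-c$, but the oracle model constrains the noise $Z_t$ to be zero-mean, so the single noisy observation $\tilde{\bx}_t=\bx_t+Z_t$ must have mean $+c$ in world $A$ and mean $-c$ in world $B$. Two distributions with different means are never identical, so no choice of $q$ in your mixture (which puts mass on $+c/q$ in one world and on $-c/q$ in the other) can match the observation laws. Worse, once the observation distributions differ in mean, a learner that simply averages its observations over time recovers the sign of $c$ and becomes consistent, so your adversary strategy provably fails to force constant regret. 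Everything downstream (the pigeonhole between worlds, the probability-one $\limsup$) is moot because it rests on this impossible distributional equivalence.

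The paper's construction avoids this by making the two adversary strategies differ not in the mean of the observations but in whether the observed variability is \emph{signal} or \emph{noise}: strategy one draws the true instances i.i.d.\ from a distribution $\Dcal$ (uniform on $\{3\bx,-\bx\}$ with $\bx=(1,0,\dots,0)$) and adds no noise; strategy two fixes the true instance at $\E_{\Dcal}[\bx]$ and adds the zero-mean noise $Z-\E_{\Dcal}[\bx]$ with $Z\sim\Dcal$. The observations are distributed as $\Dcal$ in both cases, the noise is legitimately zero-mean, and yet the regret benchmarks differ: one strategy requires (approximately) minimizing $\bw\mapsto\E_{\Dcal}[\ell(\inner{\bw,\bx},1)]$ and the other $\bw\mapsto\ell(\inner{\bw,\E[\bx]},1)$. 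Conditions (1) and (2) on $\ell$ are then used to show these two argmin sets are disjoint for a large enough ball $\Wcal$ (a subgradient argument: a common minimizer would force $0\in\partial\ell(-w,1)$ and hence $\ell'(0,1)=0$). Note also that the correct construction makes the true instances random in one world, so the paper does need a concentration step (Azuma plus Borel--Cantelli, then convexity and a cluster-point argument on $\bar{\bw}_T$) to get the almost-sure $\limsup$ statement; your hope of a purely deterministic per-round gap does not survive the repair.
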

Note that condition (1) is satisfied by virtually any loss function other
than the linear loss, while condition (2) is satisfied by most regression
losses, and by all \emph{classification calibrated losses}, which include all
reasonable losses for classification (see~\cite{BartlettJorMca06}). The most
obvious example where the conditions are not satisfied is when $\ell(\cdot,1)$
is a linear function. This is not surprising, because when $\ell(\cdot,1)$
is linear, the learner is always robust to noise
(see the discussion at~\secref{sec:techniques}).

The intuition of the proof is very simple: the adversary chooses beforehand
whether the examples are drawn i.i.d.\ from a distribution $\Dcal$, and then
perturbed by noise, or drawn i.i.d.\ from some other distribution $\Dcal'$
without adding noise. The distributions $\Dcal,\Dcal'$ and the noise are
designed so that the examples observed by the learner are distributed in the
same way irrespective to which of the two sampling strategies the adversary
chooses. Therefore, it is impossible for the learner accessing a single copy of
each instance to be statistically consistent with respect to both distributions
simultaneously. As a result, the adversary can always choose a distribution on
which the algorithm will be inconsistent, leading to constant regret. The full
proof is presented in Section \ref{subsec:impossibility2}.

\section{Proofs}\label{sec:proofs}
Due to the lack of space, some of the proofs are given in the
\fullpaper{the appendix}\conferencepaper{\cite{CesShalSham10}}. 

\subsection{Preliminary Result}
To prove \thmref{thm:main_inner_product} and \thmref{thm:main_Gaussian}, we
need a theorem which basically states that if all subroutines in algorithm
\ref{algo:kernel} behave as they should, then one can achieve an $O(\sqrt{T})$
regret bound. This is provided in the following theorem, which is an adaptation
of a standard result of online convex optimization (see,
e.g.,~\cite{Zinkevich03}). The proof is given in \fullpaper{\appref{app:ProofKernelMain}}\conferencepaper{\cite{CesShalSham10}}.
\begin{theorem}\label{thm:kernel_main}
Assume the following conditions hold with respect to \algoref{algo:kernel}: 
\begin{enumerate}
\item For all $t$, $\tilde{\Psi}(\bx_t)$ and $\tilde{g}_t$ are
independent of each other (as random variables induced by the randomness of 
\algoref{algo:kernel}) as well as independent of any 
$\tilde{\Psi}(\bx_i)$ and $\tilde{g}_i$ for $i<t$.
\item For all $t$, $\E[\tilde{\Psi}(\bx_t)]=\Psi(\bx_t)$, and there
  exists a constant $B_{\tilde{\Psi}}>0$ such that
  $\E[\norm{\tilde{\Psi}(\bx_t)}^2]\leq B_{\tilde{\Psi}}$.
\item For all $t$, $\E[\tilde{g}_t]=y_t\ell'(y_t\inner{\bw_t,\Psi(\bx_t)})$, and there
exists a constant $B_{\tilde{g}}>0$ such that $\E[\tilde{g}^2_t]\leq 
B_{\tilde{g}}$.
\item For any pair of instances $\bx,\bx'$,
  $\texttt{Prod}(\tilde{\Psi}(\bx),\tilde{\Psi}(\bx'))
  = \inner{\tilde{\Psi}(\bx),\tilde{\Psi}(\bx')}$.
\end{enumerate}
Then if \algoref{algo:kernel} is run with 
$\eta=\sqrt{\frac{B_{\bw}}{B_{\tilde{g}}B_{\tilde{\Psi}}}}$, the 
following inequality holds
\[
    \E\left[\sum_{t=1}^T\ell\bigl(y_t\inner{\bw_t,\Psi(\bx_t)}\bigr)
-
    \min_{\bw \,:\, \norm{\bw}^2 \le B_{\bw}}
    \sum_{t=1}^T\ell\bigl(y_t\inner{\bw,\Psi(\bx_t)}\bigr)\right]
\leq 
    \sqrt{B_{\bw}B_{\tilde{g}}B_{\tilde{\Psi}}T}~.
\]
    where the expectation is with respect to the randomness of 
    the oracles and the algorithm throughout its run.
\end{theorem}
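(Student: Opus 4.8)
The plan is to recognize that, under Conditions~1--4, \algoref{algo:kernel} is nothing but projected online gradient descent (in the style of Zinkevich) run with the randomized update direction $\tilde{\nabla}_t := \tilde{g}_t\,\tilde{\Psi}(\bx_t)$, and then to carry out the textbook analysis with one extra conditioning step to pass from $\tilde{\nabla}_t$ to the true loss gradient. First I would fix a deterministic comparator $\bw\in\Wcal := \{\bv\in\reals^d : \norm{\bv}^2\le B_{\bw}\}$; it suffices to bound $\E\bigl[\sum_t\ell(y_t\inner{\bw_t,\Psi(\bx_t)}) - \sum_t\ell(y_t\inner{\bw,\Psi(\bx_t)})\bigr]$ for the $\bw$ minimizing $\bw\mapsto\E\bigl[\sum_t\ell(y_t\inner{\bw,\Psi(\bx_t)})\bigr]$, since this quantity dominates the theorem's left-hand side. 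I would also set up a filtration $(\Fcal_t)$ in which $\Fcal_{t-1}$ records all oracle and algorithmic randomness through round $t-1$, so that $\bw_t$ is $\Fcal_{t-1}$-measurable. Condition~4 makes every \texttt{Prod} evaluation exact, so the squared-norm test and the rescaling of the $\alpha_i$ in the algorithm implement the exact Euclidean projection $P$ onto $\Wcal$, and the iterates obey $\bw_1 = 0$ and $\bw_{t+1} = P\!\bigl(\bw_t - (\eta/\sqrt{T})\tilde{\nabla}_t\bigr)$.

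Next I would record the standard one-step contraction. Since $P$ is nonexpansive and $\bw\in\Wcal$,
\[
\norm{\bw_{t+1}-\bw}^2 \le \norm{\bw_t-\bw}^2 - \tfrac{2\eta}{\sqrt{T}}\inner{\tilde{\nabla}_t,\bw_t-\bw} + \tfrac{\eta^2}{T}\norm{\tilde{\nabla}_t}^2,
\]
so rearranging, summing over $t=1,\dots,T$, telescoping, and using $\norm{\bw_1-\bw}^2 = \norm{\bw}^2\le B_{\bw}$ gives the pathwise bound
\[
\sum_{t=1}^{T}\inner{\tilde{\nabla}_t,\bw_t-\bw} \le \frac{\sqrt{T}\,B_{\bw}}{2\eta} + \frac{\eta}{2\sqrt{T}}\sum_{t=1}^{T}\norm{\tilde{\nabla}_t}^2 .
\]

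Then I would take expectations and bring in the probabilistic assumptions. By Conditions~1--3, $\E[\tilde{\nabla}_t\mid\Fcal_{t-1}] = y_t\ell'\!\bigl(y_t\inner{\bw_t,\Psi(\bx_t)}\bigr)\Psi(\bx_t) =: \nabla_t$, which is exactly the gradient at $\bw_t$ of the convex map $\bv\mapsto\ell\!\bigl(y_t\inner{\bv,\Psi(\bx_t)}\bigr)$; since $\bw_t$ and $\bw$ are $\Fcal_{t-1}$-measurable, the tower rule yields $\E\inner{\tilde{\nabla}_t,\bw_t-\bw} = \E\inner{\nabla_t,\bw_t-\bw}$, and convexity yields $\ell(y_t\inner{\bw_t,\Psi(\bx_t)}) - \ell(y_t\inner{\bw,\Psi(\bx_t)}) \le \inner{\nabla_t,\bw_t-\bw}$, so the expected regret is at most $\E\bigl[\sum_t\inner{\tilde{\nabla}_t,\bw_t-\bw}\bigr]$. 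For the second-moment term, Condition~1 makes $\tilde{g}_t$ and $\tilde{\Psi}(\bx_t)$ conditionally independent given $\Fcal_{t-1}$, hence $\E[\norm{\tilde{\nabla}_t}^2\mid\Fcal_{t-1}] = \E[\tilde{g}_t^2\mid\Fcal_{t-1}]\cdot\E[\norm{\tilde{\Psi}(\bx_t)}^2\mid\Fcal_{t-1}] \le B_{\tilde{g}}B_{\tilde{\Psi}}$ by Conditions~2--3 (the bounds of \lemref{lem:grad_unbiased} and of the \texttt{Map\_Estimate} analysis are deterministic, hence hold conditionally). Plugging these into the pathwise bound, the expected regret is at most $\frac{\sqrt{T}B_{\bw}}{2\eta} + \frac{\eta\sqrt{T}}{2}B_{\tilde{g}}B_{\tilde{\Psi}}$, which is minimized at $\eta = \sqrt{B_{\bw}/(B_{\tilde{g}}B_{\tilde{\Psi}})}$ and there equals $\sqrt{B_{\bw}B_{\tilde{g}}B_{\tilde{\Psi}}T}$, the claimed bound.

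The arithmetic is the usual Zinkevich computation; the part that requires care is the probabilistic bookkeeping. One must ensure the comparator $\bw$ carries no dependence on the algorithm's internal randomness, so that $\tilde{\nabla}_t$ may legitimately be replaced by its conditional mean $\nabla_t$ under the expectation, and one must invoke the conditional independence of Condition~1 precisely where it is used --- in bounding $\E[\norm{\tilde{\nabla}_t}^2\mid\Fcal_{t-1}]$ by the \emph{product} $B_{\tilde{g}}B_{\tilde{\Psi}}$ rather than having to control a genuine joint second moment of $\tilde{g}_t\tilde{\Psi}(\bx_t)$.
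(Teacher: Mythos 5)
Your proposal is correct and follows essentially the same route as the paper's proof: the paper invokes Zinkevich's regret bound for linear costs on the surrogate sequence $\tilde{g}_t\tilde{\Psi}(\bx_t)$ as a black box, whereas you re-derive it via the one-step nonexpansive-projection contraction, but the subsequent steps --- tower rule to replace $\tilde{g}_t\tilde{\Psi}(\bx_t)$ by its conditional mean, conditional independence to factor $\E[\norm{\tilde{g}_t\tilde{\Psi}(\bx_t)}^2]\leq B_{\tilde{g}}B_{\tilde{\Psi}}$, convexity to pass from the linearized to the true regret, and the choice of $\eta$ --- are identical. (The only nit: your claim that the fixed-comparator regret ``dominates'' the theorem's left-hand side is stated in the wrong direction in general, but for a fixed data sequence, as both you and the paper implicitly assume, the two quantities coincide.)
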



\subsection{Proof of \thmref{thm:main_inner_product}}\label{subsec:innprod}

In this subsection, we present the proof of \thmref{thm:main_inner_product}. We
first show how to implement the subroutines of \algoref{algo:kernel}, and prove
the relevant results on their behavior. Then, we prove the theorem itself.

It is known that for $k(\cdot,\cdot)=Q(\inner{\bx,\bx'})$ to be a valid kernel, it 
is necessary that $Q(\inner{\bx,\bx'})$ can be written as a Taylor expansion 
$\sum_{n=0}^{\infty}\beta_n (\inner{\bx,\bx'})^n$, where $\beta_n \geq 0$ (see 
theorem 4.19 in \cite{ScholkopfSm02}). This makes these types of kernels 
amenable to our techniques.

We start by constructing an explicit feature mapping $\Psi(\cdot)$ 
corresponding to the RKHS induced by our kernel. For any $\bx,\bx'$, we have 
that
\begin{align*}
    k(\bx,\bx')
&=
    \summ_{n=0}^{\infty}\beta_n(\inner{\bx,\bx'})^n
=
    \summ_{n=0}^{\infty}\beta_n\left(\summ_{i=1}^{d}x_i x'_i\right)^n
\\ &=
    \summ_{n=0}^{\infty}\beta_n\summ_{k_1=1}^{d}\cdots\summ_{k_n=1}^{d} 
    x_{k_1}x_{k_2}\cdots x_{k_n}x'_{k_1}x'_{k_2}\cdots x'_{k_n}
\\ &=
    \summ_{n=0}^{\infty}\summ_{k_1=1}^{d}\cdots\summ_{k_n=1}^{d} 
    \left(\sqrt{\beta_n}x_{k_1}x_{k_2}\cdots x_{k_n}\right) 
    \left(\sqrt{\beta_n}x'_{k_1}x'_{k_2}\cdots x'_{k_n}\right).
\end{align*}
This suggests the following feature representation: for any $\bx$, $\Psi(\bx)$ 
returns an infinite-dimensional vector, indexed by $n$ and $k_1,\ldots,k_n \in 
\{1,\ldots,d\}$, with the entry corresponding to $n,k_1,\ldots,k_n$ being 
$\sqrt{\beta_n}x_{k_1}\cdots x_{k_n}$. The dot product between $\Psi(\bx)$ and 
$\Psi(\bx')$ is similar to a standard dot product between two vectors, and by 
the derivation above equals $k(\bx,\bx')$ as required.

We now use a slightly more elaborate variant of our unbiased estimate
technique, to derive an unbiased estimate of $\Psi(\bx)$. First, we sample $N$
according to $\Pr(N=n)=(p-1)/p^{n+1}$. Then, we query the oracle for $\bx$ for $N$
times to get $\tilde{\bx}^{(1)},\ldots,\tilde{\bx}^{(N)}$, and formally define
$\tilde{\Psi}(\bx)$ as
\begin{equation}\label{eq:tildepsi_explicit}
\tilde{\Psi}(\bx)~=~\sqrt{\beta_n}\frac{p^{n+1}}{p-1}\sum_{k_1=1}^{d}\cdots\summ_{k_n=1}^{d} 
\tilde{x}^{(1)}_{k_1}\cdots\tilde{x}^{(n)}_{k_n}\be_{n,k_1,\ldots,k_n}
\end{equation}
where $\be_{n,k_1,\ldots,k_n}$ represents the unit vector in the direction
indexed by $n,k_1,\ldots,k_n$ as explained above. Since the oracle queries are
i.i.d., the expectation of this expression is
\begin{align*}
\summ_{n=0}^{\infty}\frac{p-1}{p^{n+1}} 
\sqrt{\beta_n} \frac{p^{n+1}}{p-1}\summ_{k_1=1}^{d}\!\!\cdots\!\!\summ_{k_n=1}^{d} 
\E\bigl[\tilde{x}^{(1)}_{k_1}\cdots\tilde{x}^{(n)}_{k_n}\bigr]\be_{n,k_1,\ldots,k_n}
= \summ_{n=0}^{\infty}\summ_{k_1=1}^{d}\!\!\cdots\!\!\summ_{k_n=1}^{d} 
\sqrt{\beta_n}x^{(1)}_{k_1}\cdots x^{(n)}_{k_n}\be_{n,k_1,\ldots,k_n}
\end{align*}
which is exactly $\Psi(\bx)$.
We formalize the needed properties of $\tilde{\Psi}(\bx)$ in the following lemma.
\begin{lemma}\label{lem:tilde_unbiased}
Assuming $\tilde{\Psi}(\bx)$ is constructed as in the discussion above, it
holds that $\E[\tilde{\Psi}(\bx)]=\Psi(\bx)$ for any $\bx$. Moreover, if
the noisy samples $\tilde{\bx}_t$ returned by the oracle $A_t$ satisfy
$\E[\norm{\tilde{\bx}_t}^2]\leq B_{\tilde{\bx}}$, then
\[
\E\left[\norm{\tilde{\Psi}(\bx_t)}^2\right]\leq 
\frac{p}{p-1}Q(pB_{\tilde{\bx}})
\]
where we recall that $Q$ defines the kernel by
$k(\bx,\bx')=Q(\inner{\bx,\bx'})$.
\end{lemma}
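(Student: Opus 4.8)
The plan is to establish the two claims of \lemref{lem:tilde_unbiased} separately: unbiasedness, which essentially follows from the computation already displayed in the text, and the second-moment bound, which is the real content here.

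\textbf{Unbiasedness.} First I would simply point to the displayed calculation preceding the lemma: since $\Pr(N=n)=(p-1)/p^{n+1}$ and the $N$ oracle queries are i.i.d.\ with $\E[\tilde{\bx}^{(j)}]=\bx$, the factor $p^{n+1}/(p-1)$ exactly cancels the probability weight, and using $\E\bigl[\tilde{x}^{(1)}_{k_1}\cdots\tilde{x}^{(n)}_{k_n}\bigr]=x_{k_1}\cdots x_{k_n}$ (independence across the $n$ separate queries) the sum telescopes into $\sum_{n}\sum_{k_1,\ldots,k_n}\sqrt{\beta_n}x_{k_1}\cdots x_{k_n}\be_{n,k_1,\ldots,k_n}=\Psi(\bx)$. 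One subtlety worth a sentence: the interchange of expectation and the infinite sum over $n$ needs the dominated/monotone convergence justification, which is guaranteed precisely by the hypothesis that the relevant series converge (the $\beta_n\ge 0$ Taylor expansion of $Q$ together with $\E[\|\tilde{\bx}_t\|^2]\le B_{\tilde{\bx}}$), the same condition that makes the second-moment bound finite.

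\textbf{Second moment.} This is the main step. I would compute $\E\bigl[\norm{\tilde{\Psi}(\bx_t)}^2\bigr]$ by conditioning on $N=n$. Given $N=n$, from \eqref{eq:tildepsi_explicit} the vector $\tilde{\Psi}(\bx_t)$ lies in the span of the $\be_{n,k_1,\ldots,k_n}$ for that fixed $n$, so its squared norm is $\beta_n\,\tfrac{p^{2(n+1)}}{(p-1)^2}\sum_{k_1,\ldots,k_n}\bigl(\tilde{x}^{(1)}_{k_1}\bigr)^2\cdots\bigl(\tilde{x}^{(n)}_{k_n}\bigr)^2 = \beta_n\,\tfrac{p^{2(n+1)}}{(p-1)^2}\prod_{j=1}^{n}\bigl(\sum_{k=1}^{d}(\tilde{x}^{(j)}_{k})^2\bigr)=\beta_n\,\tfrac{p^{2(n+1)}}{(p-1)^2}\prod_{j=1}^{n}\norm{\tilde{\bx}^{(j)}}^2$. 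Taking expectation over the i.i.d.\ queries gives $\beta_n\,\tfrac{p^{2(n+1)}}{(p-1)^2}\bigl(\E[\norm{\tilde{\bx}_t}^2]\bigr)^n\le \beta_n\,\tfrac{p^{2(n+1)}}{(p-1)^2}B_{\tilde{\bx}}^n$. Now multiplying by $\Pr(N=n)=(p-1)/p^{n+1}$ and summing over $n$ yields
\[
\E\left[\norm{\tilde{\Psi}(\bx_t)}^2\right]
\le \sum_{n=0}^{\infty}\frac{p-1}{p^{n+1}}\cdot\beta_n\frac{p^{2(n+1)}}{(p-1)^2}B_{\tilde{\bx}}^n
= \frac{p}{p-1}\sum_{n=0}^{\infty}\beta_n\,(pB_{\tilde{\bx}})^n
= \frac{p}{p-1}\,Q(pB_{\tilde{\bx}}),
\]
using the Taylor expansion $Q(z)=\sum_n\beta_n z^n$ with $\beta_n\ge 0$ in the last equality. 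This is exactly the claimed bound.

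\textbf{Anticipated obstacle.} There is nothing deep here, but the one point requiring care is the per-coordinate bookkeeping in the norm computation: one must observe that distinct multi-indices $(n,k_1,\ldots,k_n)$ index orthogonal unit vectors (so $\norm{\tilde{\Psi}(\bx_t)}^2$ is a plain sum of squared coefficients with no cross terms) and that $\sum_{k_1,\ldots,k_n}\prod_{j}(\tilde{x}^{(j)}_{k_j})^2$ factorizes as $\prod_{j=1}^{n}\norm{\tilde{\bx}^{(j)}}^2$ because each index $k_j$ ranges independently. Once that factorization is in hand, the independence of the $n$ separate oracle calls turns the expectation of the product into the product of expectations, each bounded by $B_{\tilde{\bx}}$, and the geometric-series cancellation is immediate. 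I would also remark that finiteness of $Q(pB_{\tilde{\bx}})$ is exactly the hypothesis (implicitly) needed for all the interchanges above to be valid, mirroring the role of $f_+$ in \lemref{lem:unb_est}.
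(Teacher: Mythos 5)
Your proposal is correct and follows essentially the same route as the paper's proof: unbiasedness is read off from the displayed expectation computation, and the second moment is obtained by conditioning on $N=n$, using orthogonality of the basis vectors and independence of the oracle calls to reduce $\E\bigl[\norm{\tilde{\Psi}(\bx_t)}^2 \mid N=n\bigr]$ to $\beta_n\tfrac{p^{2(n+1)}}{(p-1)^2}\bigl(\E[\norm{\tilde{\bx}_t}^2]\bigr)^n$, then summing the geometric weights and invoking $\beta_n\ge 0$ to bound the result by $\tfrac{p}{p-1}Q(pB_{\tilde{\bx}})$. Your added remarks on the factorization of the coordinate sum and on the convergence needed for interchanging expectation with the infinite sum are just explicit versions of steps the paper leaves implicit.
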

\begin{proof}
The first part of the lemma follows from the discussion above. As to the second
part, note that by~(\ref{eq:tildepsi_explicit}),
\begin{align*}
&\E\left[\norm{\tilde{\Psi}(\bx_t)}^2\right]
= \E\left[\beta_n \frac{p^{2n+2}}{(p-1)^2}
\sum_{k_1\ldots,k_n=1}^{d}\left(\tilde{x}^{(1)}_{t,k_1}\cdots\tilde{x}^{(N)}_{t,k_n}\right)^2\right]
= \E\left[\beta_n
\frac{p^{2n+2}}{(p-1)^2}\prod_{j=1}^{n}\big\|\tilde{\bx}^{(j)}_t\big\|^2\right]\\
&= \sum_{n=0}^{\infty}\frac{p-1}{p^{n+1}}\beta_n
\frac{p^{2n+2}}{(p-1)^2}\left(\E\bigl[\tilde{\bx}^2_t\bigr]\right)^n
= \frac{p}{p-1}\sum_{n=0}^{\infty}\beta_n \left(p\E\bigl[\tilde{\bx}^2_t\bigr]\right)^n
\leq \frac{p}{p-1}\sum_{n=0}^{\infty}\beta_n \bigl(pB_{\tilde{\bx}}\bigr)^n
= \frac{p}{p-1}Q(pB_{\tilde{\bx}})
\end{align*}
where the second-to-last step used the fact that $\beta_n\geq 0$ for all
$n$.
\end{proof}
Of course, explicitly storing $\tilde{\Psi}(\bx)$ as defined above is
infeasible, since the number of entries is huge. Fortunately, this is not
needed: we just need to store $\tilde{\bx}^{(1)}_t,\ldots,\tilde{\bx}^{(N)}_t$. The
representation above is used implicitly when we calculate dot products between 
$\tilde{\Psi}(\bx)$ and other elements in the RKHS, via the subroutine 
\texttt{Prod}. We note that while $N$ is a random quantity which might be 
unbounded, its distribution decays exponentially fast, so the number of 
vectors to store is essentially bounded. 

After the discussion above, the pseudocode for \texttt{Map\_Estimate} below 
should be self-explanatory. 

\begin{Subroutine}
 \caption{\texttt{Map\_Estimate}$(A_t,p)$} \label{sub:map}
    \begin{Balgorithm}
       Sample nonnegative integer $N$ according to $\Pr(N=n)=(p-1)/p^{n+1}$\\
       Query $A_t$ for $N$ times to get 
       $\tilde{\bx}^{(1)}_t,\ldots,\tilde{\bx}^{(N)}_t$\\
       Return $\tilde{\bx}^{(1)}_t,\ldots,\tilde{\bx}^{(N)}_t$ as 
       $\tilde{\Psi}(\bx_t)$.
    \end{Balgorithm}
\end{Subroutine}

We now turn to the subroutine $\texttt{Prod}$, which given two elements in the
RKHS, returns their dot product. This subroutine comes in two flavors: either
as a procedure defined over $\tilde{\Psi}(\bx),\tilde{\Psi}(\bx')$ and
returning $\inner{\tilde{\Psi}(\bx),\tilde{\Psi}(\bx')}$
(\subref{sub:prod1}); or as a procedure defined over
$\tilde{\Psi}(\bx),\bx'$ (\subref{sub:prod2}, where the second element
is an explicitely given vector) and returning
$\inner{\tilde{\Psi}(\bx),\Psi(\bx')}$. This second variant of
\texttt{Prod} is needed when we wish to apply the learned predictor on a new
given instance $\bx'$.
 
\begin{Subroutine}
 \caption{\texttt{Prod}$(\tilde{\Psi}(\bx),\tilde{\Psi}(\bx'))$} \label{sub:prod1}
    \begin{Balgorithm}
       Let $\bx^{(1)},\ldots,\bx^{(n)}$ be the index and vectors comprising $\Psi(\bx)$\\
       Let $\bx'^{(1)},\ldots,\bx'^{(n')}$ be the index and vectors comprising $\Psi(\bx')$\\
       If $n\neq n'$ return $0$, else return 
       $\beta_n
       \frac{p^{2n+2}}{(p-1)^2}\prod_{j=1}^{n}\inner{\tilde{\bx}^{(j)},\tilde{\bx}'^{(j)}}$
    \end{Balgorithm}
 \end{Subroutine}
 
\begin{lemma}\label{lem:prod_correct}
  \texttt{Prod}$(\tilde{\Psi}(\bx),\tilde{\Psi}(\bx'))$ returns
  $\inner{\tilde{\Psi}(\bx)\tilde{\Psi}(\bx')}$.
\end{lemma}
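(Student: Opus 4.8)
The plan is to unwind both sides of the claimed identity using the explicit coordinate representation of RKHS elements introduced above, and verify they coincide. Recall that $\tilde{\Psi}(\bx)$, after sampling $N=n$ and querying $\tilde{\bx}^{(1)},\ldots,\tilde{\bx}^{(n)}$, is the vector supported only on coordinates whose first index equals $n$, with the entry indexed by $(n,k_1,\ldots,k_n)$ equal to $\sqrt{\beta_n}\,\frac{p^{n+1}}{p-1}\,\tilde{x}^{(1)}_{k_1}\cdots\tilde{x}^{(n)}_{k_n}$; similarly $\tilde{\Psi}(\bx')$, after sampling $N=n'$ and querying $\tilde{\bx}'^{(1)},\ldots,\tilde{\bx}'^{(n')}$, is supported on coordinates with first index $n'$, with entry $\sqrt{\beta_{n'}}\,\frac{p^{n'+1}}{p-1}\,\tilde{x}'^{(1)}_{k_1}\cdots\tilde{x}'^{(n')}_{k_{n'}}$ (with the convention that an empty product equals $1$, covering the $n=0$ case).

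First I would dispatch the case $n\neq n'$: the two vectors then have disjoint supports, since one lies entirely in the block of coordinates whose first index is $n$ and the other in the block whose first index is $n'$. Hence $\inner{\tilde{\Psi}(\bx),\tilde{\Psi}(\bx')}=0$, which is exactly what \subref{sub:prod1} returns in that branch.

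Next, for $n=n'$, the inner product is the sum, over the common index set $\{(n,k_1,\ldots,k_n):k_1,\ldots,k_n\in\{1,\ldots,d\}\}$, of the products of the matching entries:
\[
\inner{\tilde{\Psi}(\bx),\tilde{\Psi}(\bx')}
=\beta_n\frac{p^{2n+2}}{(p-1)^2}\sum_{k_1=1}^{d}\cdots\sum_{k_n=1}^{d}\prod_{j=1}^{n}\tilde{x}^{(j)}_{k_j}\tilde{x}'^{(j)}_{k_j}.
\]
The key step, entirely analogous to the expansion of $(\inner{\bx,\bx'})^n$ into a multi-index sum used above to derive the feature mapping, is to factor the nested sum over $k_1,\ldots,k_n$ into a product of $n$ independent one-dimensional sums: since the $j$-th factor depends only on $k_j$,
\[
\sum_{k_1=1}^{d}\cdots\sum_{k_n=1}^{d}\prod_{j=1}^{n}\tilde{x}^{(j)}_{k_j}\tilde{x}'^{(j)}_{k_j}
=\prod_{j=1}^{n}\sum_{k_j=1}^{d}\tilde{x}^{(j)}_{k_j}\tilde{x}'^{(j)}_{k_j}
=\prod_{j=1}^{n}\inner{\tilde{\bx}^{(j)},\tilde{\bx}'^{(j)}}.
\]
Substituting back gives $\inner{\tilde{\Psi}(\bx),\tilde{\Psi}(\bx')}=\beta_n\frac{p^{2n+2}}{(p-1)^2}\prod_{j=1}^{n}\inner{\tilde{\bx}^{(j)},\tilde{\bx}'^{(j)}}$, which is precisely the value returned by \subref{sub:prod1} in the $n=n'$ branch, completing the argument.

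There is no genuine obstacle here; the only things requiring care are bookkeeping — checking that the two normalization factors $\sqrt{\beta_n}\,\frac{p^{n+1}}{p-1}$ multiply to $\beta_n\frac{p^{2n+2}}{(p-1)^2}$ — and stating the support-disjointness argument cleanly for $n\neq n'$. The distributive-law factorization of the multi-index sum is the single substantive identity, and it is exactly the one already used in deriving $k(\bx,\bx')=\summ_{n=0}^{\infty}\beta_n(\inner{\bx,\bx'})^n$.
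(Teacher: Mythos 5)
Your proof is correct and follows the same route as the paper's: orthogonality of the coordinate blocks handles $n\neq n'$, and for $n=n'$ the multi-index sum factors into the product $\prod_{j=1}^{n}\inner{\tilde{\bx}^{(j)},\tilde{\bx}'^{(j)}}$ with the normalization constants multiplying to $\beta_n\frac{p^{2n+2}}{(p-1)^2}$, exactly as in the paper's computation.
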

\begin{proof}
Using the formal representation of $\tilde{\Psi}(\bx),\tilde{\Psi}(\bx')$
in~(\ref{eq:tildepsi_explicit}), we have that
  $\inner{\tilde{\Psi}(\bx),\tilde{\Psi}(\bx')}$ is $0$ whenever $n\neq
  n'$ (because then these two elements are composed of different unit vectors
  with respect to an orthogonal basis). Otherwise, we have that
\begin{align*}
    \inner{\tilde{\Psi}(\bx)\tilde{\Psi}(\bx')}
&=
    \beta_n \frac{p^{2n+2}}{(p-1)^2}\summ_{k_1,\ldots,k_n=1}^{d}
    \tilde{x}^{(1)}_{k_1}\cdots\tilde{x}^{(n)}_{k_n}\,\tilde{x}'^{(1)}_{k_1}\cdots\tilde{x}'^{(n)}_{k_n}
\\ &=
    \beta_n
    \frac{p^{2n+2}}{(p-1)^2}\left(\summ_{k_1=1}^{d}\tilde{x}^{(1)}_{k_1}\tilde{x}'^{(1)}_{k_1}\right)
    \cdots \left(\summ_{k_N=1}^{d}\tilde{x}^{(n)}_{k_N}\tilde{x}'^{(n)}_{k_N}\right)
=
    \beta_n \frac{p^{2n+2}}{(p-1)^2}\prod_{j=1}^{N}\left(\inner{\tilde{\bx}^{(j)},\tilde{\bx}'^{(j)}}\right)
\end{align*}
which is exactly what the algorithm returns, hence the lemma follows.
\end{proof}
The pseudocode for calculating the dot product
$\inner{\tilde{\Psi}(\bx),\Psi(\bx')}$ (where $\bx'$ is known) is very
similar, and the proof is essentially the same.
 
\begin{Subroutine}
   \caption{\texttt{Prod}$(\tilde{\Psi}(\bx),\bx')$} \label{sub:prod2}
   \begin{Balgorithm}
     Let $n,\bx^{(1)},\ldots,\bx^{(n)}$ be the index and vectors comprising $\Psi(\bx)$\\
     Return $\beta_n
     \frac{p^{n+1}}{p-1}\prod_{j=1}^{n}\inner{\tilde{\bx}^{(j)},\bx'}$
   \end{Balgorithm}
\end{Subroutine}

We are now ready to prove \thmref{thm:main_inner_product}. First,
regarding the expected number of queries, notice that to run
\algoref{algo:kernel}, we invoke \texttt{Map\_Estimate} and
\texttt{Grad\_Length\_Estimate} once at round $t$. \texttt{Map\_Estimate} uses
a random number $B$ of queries distributed as $\Pr(B=n)=(p-1)/p^{n+1}$, and
\texttt{Grad\_Length\_Estimate} invokes \texttt{Map\_Estimate} a random number
$C$ of times, distributed as $\Pr(C=n)=(p-1)/p^{n+1}$. The total number of
queries is therefore $\sum_{j=1}^{C+1}B_{j}$, where $B_{j}$ for
all $j$ are i.i.d.\ copies of $B$. The expected value of this expression, using a
standard result on the expected value of a sum of a random number of independent random
variables, is equal to $(1+\E[C])\E[B_{j}]$, or
$\bigl(1+\frac{1}{p-1}\bigr)\frac{1}{p-1}=\frac{p}{(p-1)^2}$.

In terms of running time, we note that the expected running time of
\texttt{Prod} is $O\bigl(1+\frac{d}{p-1}\bigr)$, this because it performs $N$
multiplications of inner products, each one with running time $O(d)$, and
$\E[N] = \tfrac{1}{p-1}$. The expected running time of
\texttt{Map\_Estimate} is $O\bigl(1+\tfrac{1}{p-1}\bigr)$. The expected running time of
\texttt{Grad\_Length\_Estimate} is
$O\bigl(1+\tfrac{1}{p-1}\bigl(1+\tfrac{1}{p-1}\bigr)+T\bigl(1+\tfrac{d}{p-1}\bigr)\bigr)$,
which can be written as $O\bigl(\tfrac{p}{(p-1)^2}+T\bigl(1+\tfrac{d}{p-1}\bigr)\bigr)$. Since
\algoref{algo:kernel} at each of $T$ rounds calls \texttt{Map\_Estimate} once,
\texttt{Grad\_Length\_Estimate} once, \texttt{Prod} for $O(T^2)$ times, and
performs $O(1)$ other operations, we get that the overall runtime is
\[
O\left(T\left(1+\frac{1}{p-1}+\frac{p}{(p-1)^2}+T\left(1+\frac{d}{p-1}\right)+T^2\left(1+\frac{d}{p-1}\right)\right)\right).
\]
Since $\tfrac{1}{p-1}\leq \tfrac{p}{(p-1)^2}$, we can upper bound this by
\[
O\left(T\left(1+\frac{p}{(p-1)^2}+T^2\left(1+\frac{dp}{(p-1)^2}\right)\right)\right)
=
O\left(T^3\left(1+\frac{dp}{(p-1)^2}\right)\right).
\]

The regret bound in the theorem follows from \thmref{thm:kernel_main}, with the
expressions for constants following from \lemref{lem:grad_unbiased},
\lemref{lem:tilde_unbiased}, and \lemref{lem:prod_correct}.

\subsection{Proof Sketch of \thmref{thm:impossibility2}}\label{subsec:impossibility2}

To prove the theorem, we use a more general result which leads to non-vanishing
regret, and then show that under the assumptions of
\thmref{thm:impossibility2}, the result holds. The proof of the result is given
in
\fullpaper{\appref{app:impossibilityproof}}\conferencepaper{\cite{CesShalSham10}}.
\begin{theorem}\label{thm:impossibility}
Let $\Wcal$ be a compact convex subset of $\reals^d$ and pick any learning
algorithm which selects hypotheses from $\Wcal$ and is allowed access to a
single noisy copy of the instance at each round $t$.
If there exists a distribution over a compact subset of $\reals^d$
such that
\begin{equation}\label{eq:impossibility}
\argmin{\bw\in \Wcal}\,\E\bigl[\ell(\inner{\bw,\bx},1)\bigr]~~~\text{and}~~~ 
\argmin{\bw\in \Wcal}\,\ell\bigl(\inner{\bw,\E[\bx]},1\bigr)
\end{equation}
are disjoint, then there exists a strategy for the adversary such 
that the sequence $\bw_1,\bw_2,\dots \in \Wcal$ of predictors output by the
algorithm satisfies
\[
    \limsup_{T\to\infty} \max_{\bw\in \Wcal} \frac{1}{T}
    \sum_{t=1}^{T}\Bigl(
    \ell(\inner{\bw_t,\bx_t},y_t) - \ell(\inner{\bw,\bx_t},y_t)
    \Bigr) > 0
\]
with probability $1$ with respect to the randomness of the oracles.
\end{theorem}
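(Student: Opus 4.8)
The plan is to confront the learner with two statistically indistinguishable generative mechanisms, on at least one of which it cannot track the optimum. Fix a distribution $\Dcal$ on a compact set $K\subset\reals^d$ for which the two sets in~\eqref{eq:impossibility} are disjoint, put $\bmu=\E_{\Dcal}[\bx]$ and $y_t\equiv 1$ for all $t$, and write $g(\bw)=\E_{\Dcal}[\ell(\inner{\bw,\bx},1)]$ and $h(\bw)=\ell(\inner{\bw,\bmu},1)$; both are continuous on the compact set $\Wcal$, so $\min g:=\min_{\bw\in\Wcal}g(\bw)$ and $\min h:=\min_{\bw\in\Wcal}h(\bw)$ are attained, on disjoint compact sets $\Wcal_g$ and $\Wcal_h$. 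The adversary draws, once and for all, an i.i.d.\ $\Dcal$-sequence $X_1,X_2,\dots$ and arranges that the single query to oracle $A_t$ returns the value $X_t$; it then selects, for each $t$, the \emph{true} instance $\bx_t\in\{\bmu,\ X_t\}$. Choosing $\bx_t=X_t$ is the ``no noise'' mode; choosing $\bx_t=\bmu$ realizes the noise $Z_t=X_t-\bmu$, a legal choice since $\E[Z_t]=0$ and $\E[\norm{Z_t}^2]<\infty$ ($K$ is bounded). The crucial observation is that the learner's query returns a fresh $\Dcal$-sample in either mode, so its deterministic predictor sequence $\bw_1,\bw_2,\dots$ is a fixed function of $X_1,X_2,\dots$ alone, the same whatever the $\bx_t$'s are; and since the adversary itself produced the $X_t$'s, it knows the entire trajectory $\bw_1,\bw_2,\dots$ up front.

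First I would record the gap forced by disjointness: as $\Wcal_g,\Wcal_h$ are disjoint compact sets and $g,h$ continuous, $\min_{\bw}\bigl(g(\bw)+h(\bw)\bigr)>\min g+\min h$, so
\[
c\ :=\ \min_{\bw\in\Wcal}\Bigl[\bigl(g(\bw)-\min g\bigr)+\bigl(h(\bw)-\min h\bigr)\Bigr]\ >\ 0 .
\]
Now compare the two ``pure'' adversary plays. If $\bx_t=\bmu$ for all $t$, then $\min_{\bw}\frac1T\sum_{t\le T}\ell(\inner{\bw,\bx_t},1)=\min h$ exactly, so the expression inside the $\limsup$ in the statement is $a_T:=\frac1T\sum_{t\le T}\bigl(h(\bw_t)-\min h\bigr)$. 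If $\bx_t=X_t$ for all $t$, then since $X_t$ is independent of $\bw_t$ a martingale strong law gives $\frac1T\sum_{t\le T}\bigl[\ell(\inner{\bw_t,X_t},1)-g(\bw_t)\bigr]\to0$ a.s., and a uniform strong law over the compact $\Wcal$ gives $\min_{\bw}\frac1T\sum_{t\le T}\ell(\inner{\bw,X_t},1)\to\min g$ a.s.; hence the expression inside the $\limsup$ equals $b_T:=\frac1T\sum_{t\le T}\bigl(g(\bw_t)-\min g\bigr)$ up to an a.s.\ $o(1)$. For every $T$ and every trajectory one has $a_T+b_T=\frac1T\sum_{t\le T}\bigl[(g(\bw_t)-\min g)+(h(\bw_t)-\min h)\bigr]\ge c$, whence $\limsup_T a_T+\limsup_T b_T\ge\limsup_T(a_T+b_T)\ge c$ and therefore $\max\{\limsup_T a_T,\ \limsup_T b_T\}\ge c/2$. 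Since the adversary knows the trajectory $\bw_1,\bw_2,\dots$ in advance, it can tell which of these two limsups is the larger and commit to the corresponding pure play, making $\limsup_T(\cdot)\ge c/2>0$ almost surely.

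The step needing care is the ``$o(1)$'' in the no-noise play: one must establish a uniform strong law for the averages $\bw\mapsto\frac1T\sum_{t\le T}\ell(\inner{\bw,X_t},1)$ over the compact $\Wcal$ --- which follows from continuity of $\ell(\cdot,1)$ and boundedness of $K$, since then $\{\bx\mapsto\ell(\inner{\bw,\bx},1)\}_{\bw\in\Wcal}$ is uniformly bounded and uniformly equicontinuous and a covering argument reduces it to the ordinary strong law --- together with the martingale strong law for the algorithm's own loss sequence. I would also want to spell out why the adversary model permits the ``foreknowledge'': it does, because the adversary is the one supplying the oracle, so pre-drawing $X_1,X_2,\dots$ is simply part of its randomized strategy, it alters nothing the learner sees, and it renders the trajectory a known deterministic function of those draws. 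Exploiting this is what keeps the proof short: a naive adversary that decides noise-versus-no-noise online can be defeated with non-negligible probability by a learner that estimates $\bmu$ from its queries and hedges between $\Wcal_g$ and $\Wcal_h$, so the move of letting the adversary fix its randomness up front is essential, and verifying its legitimacy (plus controlling the $o(1)$ uniformly) is the only real subtlety I anticipate.
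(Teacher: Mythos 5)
Your two indistinguishable adversary modes (i.i.d.\ draws from $\Dcal$ with no noise, versus the constant instance $\E_{\Dcal}[\bx]$ perturbed by $X_t-\E_{\Dcal}[\bx]$) are exactly the paper's construction, but your way of concluding is genuinely different. The paper passes to the Cesàro averages $\bar{\bw}_T=\frac1T\sum_{s\le T}\bw_s$, uses convexity of $\ell$ (Jensen) to lower-bound the time-averaged losses by $F_i(\bar{\bw}_T)$, and then takes a cluster point $\bar{\bw}$ of $(\bar{\bw}_T)$ in the compact set $\Wcal$: if both limsups were nonpositive, continuity of $F_1,F_2$ would place $\bar{\bw}$ in both argmin sets, contradicting disjointness. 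You instead quantify the disjointness as a gap $c>0$ (correct: if $g+h$ attained $\min g+\min h$ at some $\bw^*$, both nonnegative summands would vanish there and $\bw^*$ would lie in both argmin sets), derive the pointwise bound $a_T+b_T\ge c$, and get $\max\{\limsup a_T,\limsup b_T\}\ge c/2$ on every trajectory. This avoids Jensen and the cluster-point step, produces an explicit constant rather than bare positivity, and makes explicit something the paper leaves implicit: how the adversary converts ``on almost every realization at least one mode fails for the learner'' into a \emph{single} strategy that succeeds with probability $1$ --- namely by pre-drawing $X_1,X_2,\dots$, simulating the learner, and committing to the better mode. The price is that this simulation requires the learner to be a deterministic function of its observations (or the adversary to know its seed); the paper's proof carries the same tacit restriction, consistent with the theorem's ``randomness of the oracles'' phrasing. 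Two minor remarks: the uniform strong law you single out as the delicate step can be bypassed, since you only need $\min_{\bw}\frac1T\sum_{t\le T}\ell(\inner{\bw,X_t},1)\le\min g+o(1)$, which follows from the ordinary strong law applied at a single minimizer of $g$; and the continuity of $g$ and $h$ underlying your gap argument comes from convexity of $\ell$ plus compactness, exactly as invoked in the paper.
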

Another way to phrase this theorem is that the regret cannot vanish, if
given examples sampled i.i.d.\ from a distribution, the learning problem is
more complicated than just finding the mean of the data. Indeed, the
adversary's strategy we choose later on is simply drawing and presenting
examples from such a distribution.
Below, we sketch how we use \thmref{thm:impossibility} in order to prove
\thmref{thm:impossibility2}. A full proof is provided in \fullpaper{\appref{app:impossibility2proof}}\conferencepaper{\cite{CesShalSham10}}.

We construct a very simple one-dimensional distribution, which satisfies the
conditions of \thmref{thm:impossibility}: it is simply the uniform distribution
on $\{3\bx,-\bx\}$, where $\bx$ is the vector $(1,0,\ldots,0)$. Thus, it is enough
to show that 
\begin{equation}\label{eq:1sets}
\argmin{w \,:\, |w|^2 \leq B_{\bw}} \ell(3w,1)+\ell(-w,1)
\qquad\text{and}\qquad
\argmin{w \,:\, |w|^2\leq B_{\bw}} \ell(w,1)
\end{equation}
are disjoint, for some appropriately chosen $B_{\bw}$. Assuming the contrary,
then under the assumptions on $\ell$, we show that the first set in
\eqref{eq:1sets} is inside a bounded ball around the origin, in a way which is
independent of $B_{\bw}$, no matter how large it is. Thus, if we pick $B_{\bw}$
to be large enough, and assume that the two sets in \eqref{eq:1sets} are not
disjoint, then there must be some $w$ such that both $\ell(3w,1)+\ell(-w,1)$
and $\ell(w,1)$ have a subgradient of zero at $w$. However, this can be shown
to contradict the assumptions on $\ell$, leading to the desired result.


\section{Future Work}\label{sec:futurework}
There are several interesting research directions worth pursuing in the noisy 
learning framework introduced here. For instance, doing away with 
unbiasedness, which could lead to the design of estimators that are applicable 
to more types of loss functions, for which unbiased estimators may not even 
exist. Also, it would be interesting to show how additional information one 
has about the noise distribution can be used to design improved estimates, 
possibly in association with specific losses or kernels. Another open question 
is whether our lower bound (\thmref{thm:impossibility2}) can be improved when 
nonlinear kernels are used.

\bibliographystyle{plain}
\bibliography{mybib,nic}

\conferencepaper{\end{document}}

\newpage

\appendix

\section{Alternative Notions of Regret}

In the online setting, one may consider notions of regret other than~\ref{eq:regret_alt2}.
One choice is
\[
\sum_{t=1}^{T}\ell(\inner{\bw_t,\Psi(\tilde{\bx}_t)},y_t)-\min_{\bw\in
  \Wcal}\sum_{t=1}^{T}\ell(\inner{\bw,\Psi(\tilde{\bx}_t)},y_t)
\]
but this is too easy, as it reduces to standard online learning with respect to
examples which happen to be noisy. Another kind of regret we may want to minimize is 
\begin{equation}\label{eq:regret_alt1}
    \sum_{t=1}^{T}\ell(\inner{\bw_t,\Psi(\tilde{\bx}_t)},y_t)
-
    \min_{\bw\in \Wcal}\ell(\inner{\bw_t,\Psi(\bx_t)},y_t)~.
\end{equation}
This is the kind of regret which is relevant for actually predicting the values
$y_t$ well based on the noisy instances. Unfortunately, in general this is too
much to hope for. To see why, assume we deal with a linear kernel (so that
$\Psi(\bx)=\bx)$, and assume $\ell(\bw,\bx,y)=(\inner{\bw,\bx}-y)^2$. Now,
suppose that the adversary picks some $\bw^{*}\neq \mathbf{0}$ in $\Wcal$,
which might be even known to the learner, and at each round $t$ provides the
example $(\bw^{*}/\norm{\bw^{*}},1)$. It is easy to verify that
\eqref{eq:regret_alt1} in this case equals
\[
\sum_{t=1}^{T}\left(\inner{\bw_t,\tilde{\bx}_t}-1\right)^2-\mathbf{0}~.
\]
Recall that the learner chooses $\bw_t$ before $\tilde{\bx}_t$ is
revealed. Therefore, if the noise which leads to $\tilde{\bx}_t$ has positive
variance, it will generally be impossible for the learner to choose $\bw_t$
such that $\inner{\bw_t,\tilde{\bx}_t}$ is arbitrarily close to $1$. Therefore,
the equation above cannot grow sub-linearly with $T$.

\section{Proof of \thmref{thm:main_Gaussian}}

The analysis in this subsection is similar to the one of
\subsecref{subsec:innprod}, focusing on Gaussian kernels. Namely, we assume
here that the kernel $k(\bx,\bx')$ is equal to
$e^{-\norm{\bx-\bx'}^2/\sigma^2}$ for some $\sigma^2>0$.

We start by constructing an explicit feature mapping $\Psi(\cdot)$ 
corresponding to the RKHS induced by our kernel. For any $\bx,\bx'$, we have 
that
\begin{align*}
&k(\bx,\bx')~=~ 
e^{-\norm{\bx-\bx'}^2/\sigma^2}~=~e^{-\norm{\bx}^2/\sigma^2}e^{-\norm{\bx'}^2/\sigma^2}e^{2\inner{\bx,\bx'}/\sigma^2}\\
&=~ 
e^{-\norm{\bx}^2/\sigma^2}e^{-\norm{\bx'}^2/\sigma^2}\left(\summ_{n=0}^{\infty}\frac{(2\inner{\bx,\bx'})^n}{\sigma^{2n}n!}\right)\\
&=~e^{-\norm{\bx}^2/\sigma^2}e^{-\norm{\bx'}^2/\sigma^2}\left(\summ_{n=0}^{\infty}\summ_{k_1=1}^{d}\cdots\summ_{k_n=1}^{d} 
\frac{(2/\sigma^2)^n}{n!}x_{k_1}\cdots x_{k_n}x'_{k_1}\cdots x'_{k_n}\right).
\end{align*}
This suggests the following feature representation: for any $\bx$, $\Psi(\bx)$ 
returns an infinite-dimensional vector, indexed by $n$ and $k_1,\ldots,k_n \in 
\{1,\ldots,d\}$, with the entry corresponding to $n,k_1,\ldots,k_n$ being 
$e^{-\norm{\bx}^2/\sigma^2}\frac{(2/\sigma^2)^n}{n!}x_{k_1}\ldots x_{k_n}$. 
The dot product between $\Psi(\bx)$ and $\Psi(\bx')$ is similar to a standard 
dot product between two vectors, and by the derivation above equals 
$k(\bx,\bx')$ as required.

The idea of deriving an unbiased estimate of $\Psi(\bx)$ is the following:
first, we sample $N_1,N_2$ independently according to
$\Pr(N_1=n_1)=\Pr(N_2=n_2)=(p-1)/p^{n+1}$. Then, we query the oracle for $\bx$ for
$2N_1+N_2$ times to get $\tilde{\bx}^{1},\ldots,\tilde{\bx}^{(2N_1+N_2)}$, and
formally define $\tilde{\Psi}(\bx)$ as
\begin{equation}\label{eq:tildepsi_explicit2}
\tilde{\Psi}(\bx)=\frac{(-1)^{N_1}p^{N_1+N_2+2}2^{N_2}}{N_1!N_2!\sigma^{2N_1+2N_2}(p-1)^2}
\left(\prod_{j=1}^{N_1}\inner{\tilde{\bx}^{(2j-1)},\tilde{\bx}^{(2j)}}\right) 
\!\!\left(\summ_{k_1,\ldots,k_{N_2}=1}^{d}\!\!\!\!\tilde{x}^{(2N_1+1)}_{k_1}
\!\!\cdots 
\tilde{x}^{(2N_1+N_2)}_{k_{N_2}}\be_{N_2,k_1,\ldots,k_{N_2}}\right)
\end{equation}
where $\be_{N_2,k_1,\ldots,k_{N_2}}$ represents the unit vector in the 
direction indexed by $N_2,k_1,\ldots,k_{N_2}$ as explained above.
Since the oracle calls are i.i.d., it is not hard to verify that the expectation of the expression above is
\begin{align*}
  &\left(\summ_{n_1=0}^{\infty}\frac{p-1}{p^{n_1+1}}
  \frac{(-1)^{n_1}p^{n_1+1}}{n_1!\sigma^{2n_1}(p-1)}(\inner{\bx,\bx})^{n_1}\right)
\!\!
  \left(\summ_{n_2=0}^{\infty}\frac{p-1}{p^{n_2+1}}\frac{p^{n_2+1}2^{n_2}}{n_2!\sigma^{2n_2}(p-1)}
    \summ_{k_1,\ldots,k_{n_2}=1}^{d}
\!\!\!
    x_{k_1}\cdots x_{k_{n_2}}\be_{n_2,k_1,\ldots,k_{n_2}}\right)\\
  &=~\left(\summ_{n_1=0}^{\infty}\frac{(-\norm{\bx}^2/\sigma^2)^{n_1}}{n_1!}\right)
  \left(\summ_{n_2=0}^{\infty}\frac{(2/\sigma^2)^{n_2}}{n_2!}
    \summ_{k_1,\ldots,k_{n_2}=1}^{d}x_{k_1}\cdots x_{k_{n_2}}\be_{n_2,k_1,\ldots,k_{n_2}}\right)\\
  &=~e^{-\norm{\bx}^2/\sigma^2}\left(\summ_{n_2=0}^{\infty}
    \summ_{k_1,\ldots,k_{n_2}=1}^{d}\frac{(2/\sigma^2)^{n_2}}{n_2!}x_{k_1}
    \cdots x_{k_{n_2}}\be_{n_2,k_1,\ldots,k_{n_2}}\right)
\end{align*}
which is exactly $\Psi(\bx)$ as defined above.

To actually store $\tilde{\Psi}(\bx)$ in memory, we simply keep and 
$\tilde{\bx}^{(1)},\ldots,\tilde{\bx}^{(2N_1+N_2)}$. The representation above is 
used implicitly when we calculate dot products between $\tilde{\Psi}(\bx)$ 
and other elements in the RKHS, via the subroutine \texttt{Prod}. 
We formalize the needed properties of $\tilde{\Psi}(\bx)$ in the following lemma.
\begin{lemma}\label{lem:tilde_unbiased2}
Assuming the construction of $\tilde{\Psi}(\bx)$ as in the discussion
above, it holds that $\E_t[\tilde{\Psi}(\bx)]=\Psi(\bx)$ for all
$\bx$. Moreover, if the noisy sample $\tilde{\bx}_t$ returned by the oracle
$A_t$ satisfies $\E[\norm{\tilde{\bx}_t}^2]\leq B_{\tilde{\bx}}$, then
\[
    \E\left[\norm{\tilde{\Psi}(\bx_t)}^2\right]
\leq
    \left(\frac{p}{p-1}\right)^2e^{(\sqrt{p}B_{\tilde{\bx}}+2p\sqrt{B_{\tilde{\bx}}})/\sigma^2}
\]
\end{lemma}
\begin{proof}
The first part of the lemma follows from the discussion above.
As to the second part, note that by~(\ref{eq:tildepsi_explicit2}), we have that
\begin{align*}
&
    \norm{\tilde{\Psi}(\bx_t)}^2
~=~
    \frac{p^{2N_1+2N_2+4}2^{2N_2}}{\bigl(N_1!N_2!\sigma^{2N_1+2N_2}(p-1)^2\bigr)^2}
    \left(\prod_{j=1}^{N_1}\bigl(\inner{\tilde{\bx}^{(2j-1)},\tilde{\bx}^{(2j)}}\bigr)^2\right)
  \left(\summ_{k_1,\ldots,k_{N_2}=1}^{d}\left(\tilde{x}^{(2N_1+1)}_{k_1}\ldots
  \tilde{x}^{(2N_1+N_2)}_{k_{N_2}}\right)^2\right)
\\ &=
  \frac{p^{2N_1+2N_2+4}2^{2N_2}}{\bigl(N_1!N_2!\sigma^{2N_1+2N_2}(p-1)^2\bigr)^2}
  \left(\prod_{j=1}^{N_1}\bigl(\inner{\tilde{\bx}^{(2j-1)},\tilde{\bx}^{(2j)}}\bigr)^2\right)
  \left(\prod_{j=1}^{N_2}\norm{\tilde{\bx}^{(N_1+j)}}^2\right)
\\ &\leq~
  \frac{p^{2N_1+2N_2+4}2^{2N_2}}{\bigl(N_1!N_2!\sigma^{2N_1+2N_2}(p-1)^2\bigr)^2}
  B_{\tilde{\bx}}^{2N_1} B_{\tilde{\bx}}^{N_2}~.
\end{align*}
The expectation of this expression over $N_1,N_2$ is equal to
\begin{align*}
&
    \left(\sum_{n_1=0}^{\infty}\frac{p-1}{p^{n_1+1}}
    \frac{p^{2n_1+2}}{(n_1!\sigma^{2n_1}(p-1))^2}
    B_{\tilde{\bx}}^{2n_1}\right)\left(\sum_{n_2=0}^{\infty}
    \frac{p-1}{p^{n_2+1}}\frac{p^{2n_2+2}2^{2n_2}}{(n_2!\sigma^{2n_2}(p-1))^2}
    B_{\tilde{\bx}}^{n_2}\right)
\\ &=
    \left(\frac{p}{p-1}\right)^2\left(\sum_{n_1=0}^{\infty}
    \frac{(pB_{\tilde{\bx}}^2)^{n_1}}{(n_1!\sigma^{2n_1})^2}\right)
\left(\sum_{n_2=0}^{\infty}\frac{(4p^2B_{\tilde{\bx}})^{n_2}}{(n_2!\sigma^{2n_2})^2}\right)\\
&= \left(\frac{p}{p-1}\right)^2\left(\sum_{n_1=0}^{\infty}\left(\frac{(\sqrt{p}B_{\tilde{\bx}}/\sigma^2)^{n_1}}{n_1!}\right)^2\right)\left(\sum_{n_2=0}^{\infty}\left(\frac{(2p\sqrt{B_{\tilde{\bx}}}/\sigma^2)^{n_2}}{n_2!}\right)^2\right)\\
&\leq \left(\frac{p}{p-1}\right)^2\left(\left(\sum_{n_1=0}^{\infty}\frac{(\sqrt{p}B_{\tilde{\bx}}/\sigma^2)^{n_1}}{n_1!}\right)
\left(\sum_{n_2=0}^{\infty}\frac{(2p\sqrt{B_{\tilde{\bx}}}/\sigma^2)^{n_2}}{n_2!}\right)\right)^2
~=~ \left(\frac{p}{p-1}\right)^2e^{(\sqrt{p}B_{\tilde{\bx}}+2p\sqrt{B_{\tilde{\bx}}})/\sigma^2}.
\end{align*}
\end{proof}
After the discussion above, the pseudocode for \texttt{Map\_Estimate} below 
should be self-explanatory. 

\begin{Subroutine}
 \caption{\texttt{Map\_Estimate}$(A_t,p)$} \label{sub:map2}
    \begin{Balgorithm}
       Sample $N_1$ according to $\Pr(N_1=n_1)=(p-1)/p^{n_1+1}$\\
       Sample $N_2$ according to $\Pr(N_2=n_2)=(p-1)/p^{n_2+1}$\\            
       Query $A_t$ for $2N_1+N_2$ times to get 
       $\tilde{\bx}^{(1)}_t,\ldots,\tilde{\bx}^{(2N_1+N_2)}_t$\\
       Return $\tilde{\bx}^{(1)}_t,\ldots,\tilde{\bx}^{(2N_1+N_2)}_t$ as 
       $\tilde{\Psi}(\bx_t)$.
    \end{Balgorithm}
 \end{Subroutine}

 We now turn to the subroutine $\texttt{Prod}$, which given two elements in the 
 RKHS, returns their dot product. This subroutine comes in two flavors: 
 either as a procedure defined over 
 $\tilde{\Psi}(\bx),\tilde{\Psi}(\bx')$ and returning 
 $\inner{\tilde{\Psi}(\bx),\tilde{\Psi}(\bx')}$ (\subref{sub:prod3}); or as a procedure 
 defined over $\tilde{\Psi}(\bx),\bx'$ (\subref{sub:prod4}, where the second element
 is an explicitly given vector) and returning  
 $\inner{\tilde{\Psi}(\bx),\Psi(\bx')}$. This second 
variant of \texttt{Prod} is needed when we wish to apply the hypothesis on a 
new (known) instance $\bx'$.
 
\begin{Subroutine}
 \caption{\texttt{Prod}$(\tilde{\Psi}(\bx),\tilde{\Psi}(\bx'))$} \label{sub:prod3}
    \begin{Balgorithm}
       Let $\tilde{\bx}^{(n)},\ldots,\tilde{\bx}^{(2n_1+n_2)}$ be the vectors comprising $\tilde{\Psi}(\bx)$\\
       Let $\tilde{\bx'}^{(1)},\ldots,\tilde{\bx'}^{(2n'_1+n'_2)}$ be the vectors comprising
       $\tilde{\Psi}(\bx')$\\ If $n'_2\neq n'_2$ return $0$, else return
       ${\displaystyle
       \frac{(-1)^{n_1+n_1'}p^{n_1+n_1'+2n_2+4}2^{2n_2}}{n_1!n'_1!(n_2!)^2\sigma^{2(n_1+n'_1+2n_2)}(p-1)^4}
       }$
       \\[2mm]
       \hspace{3cm}$\times\,\left(\prod_{j=1}^{n_1}\inner{\tilde{\bx}^{(2j-1)},\tilde{\bx}^{(2j)}}\right)
       \left(\prod_{j=1}^{n'_1}\inner{\tilde{\bx}'^{(2j-1)},\tilde{\bx}'^{(2j)}}\right)
       \left(\prod_{j=1}^{n_2}\inner{\tilde{\bx}^{(2n_1+j)},\tilde{\bx}'^{(2n'_1+j)}}\right)$
    \end{Balgorithm}
 \end{Subroutine}
 
The proof of the following lemma is a straightforward algebraic exercise, similar to the proof of \lemref{lem:prod_correct}. 
 \begin{lemma}\label{lem:prod_correct2}
    \texttt{Prod}$(\tilde{\Psi}(\bx),\tilde{\Psi}(\bx'))$ returns 
    $\inner{\tilde{\Psi}(\bx),\tilde{\Psi}(\bx')}$.
 \end{lemma}
  
The pseudocode for calculating the dot product 
$\inner{\tilde{\Psi}(\bx),\Psi(\bx')}$ (where $\bx'$ is known) is 
very similar, and the proof is essentially the same.
 
\begin{Subroutine}
 \caption{\texttt{Prod}$(\tilde{\Psi}(\bx),\bx')$} \label{sub:prod4}
    \begin{Balgorithm}
      Let $\bx^{(1)},\ldots,\bx^{(2n_1+n_2)}$ be the vectors comprising $\tilde{\Psi}(\bx)$\\
      Return\+\\
      ${\displaystyle
      \frac{(-1)^{n_1}p^{n_1+n_2+2}2^{2n_2}}{n_1!(n_2!)^2\sigma^{2(n_1+2n_2)}(p-1)^2}
      e^{-\norm{\bx'}^2/\sigma^2}\left(\prod_{j=1}^{n_1}\inner{\tilde{\bx}^{(2j-1)},\tilde{\bx}^{(2j)}}\right)
      \left(\prod_{j=1}^{n_2}\inner{\tilde{\bx}^{(2n_1+j)},\bx'}\right)
      }$.
    \end{Balgorithm}
\end{Subroutine}
 
We are now ready to prove \thmref{thm:main_Gaussian}. First, regarding
the expected number of queries, notice that to run \algoref{algo:kernel}, we
invoke \texttt{Map\_Estimate} and \texttt{Grad\_Length\_Estimate} once at round
$t$. \texttt{Map\_Estimate} uses a random number $2B_1+B_2$ of queries, where
$B_1,B_2$ are independent and distributed as
$\Pr(B_1=n)=\Pr(B_2=n)=(p-1)/p^{n+1}$. \texttt{Grad\_Length\_Estimate} invokes
\texttt{Map\_Estimate} a random number $C$ of times, where
$\Pr(C=n)=(p-1)/p^{n+1}$. The total number of queries is therefore
$\sum_{j=1}^{C+1}(2B_{j,1}+B_{j,2})$, where $B_{j,1},B_{j,2}$ are i.i.d. copies
of $B_{1},B_{2}$ respectively. The expected value of this expression, using a
standard result on the expected value of a sum of a random number of random
variables, is equal to $(1+\E[C])(2\E[B_{j,1}]+\E[B_{j,2}])$, or
$\left(1+\frac{1}{p-1}\right)\frac{3}{p-1}=\frac{3p}{(p-1)^2}$.

In terms of running time, the analysis is completely identical to the one
performed in the proof of \thmref{thm:main_inner_product}, and the expected
running time is the same up to constants. 

The regret bound in the theorem follows from \thmref{thm:kernel_main}, with the
expressions for constants following from \lemref{lem:grad_unbiased},
\lemref{lem:tilde_unbiased2}, and \lemref{lem:prod_correct2}.

\section{Proof of Examples \ref{ex:smooth_abs} and \ref{ex:smooth_hinge}}

Examples \ref{ex:smooth_abs} and \ref{ex:smooth_hinge} use the error function 
$\mathrm{Erf}(a)$ in order to construct smooth approximations of the hinge loss 
and the absolute loss (see \figref{fig:smoothed_losses}). The error function 
is useful for our purposes, since it is analytic in all of $\reals$, and 
smoothly interpolates between $-1$ for $a\ll 0$ and $1$ for $a\gg 0$. Thus, it 
can be used to approximate derivative of losses which are piecewise linear, 
such as the hinge loss $\ell(a)=\max\{0,1-a\}$ and the absolute loss 
$\ell(a)=|a|$.

To approximate the absolute loss, we use the antiderivative of  
$\mathrm{Erf}(sa)$. This function represents a smooth upper bound on the 
absolute loss, which becomes tighter as $s$ increases. It can be verified that 
the antiderivative (with the constant free parameter fixed so the function has 
the desired behavior) is
\[
\ell(a)= a~\mathrm{Erf}(sa)+\frac{e^{-s^2 a^2}}{\sigma\sqrt{\pi}}~. 
\]

While this loss function may seem to have slightly complex form, we note that 
our algorithm only needs to calculate the derivative of this loss function at 
various points (namely $\text{Erf}(sa)$ for various values of $a$), which can 
be easily done.

By a Taylor expansion of the error function, we have that 
\[
\ell'(a)=\frac{2}{\sqrt{\pi}}\sum_{n=0}^{\infty}\frac{(-1)^n (sa)^{2n+1}}{n!(2n+1)}~.
\]
Therefore, $\ell'_{+}(a)$ in this case is at most
\[
    \frac{2}{\sqrt{\pi}}\sum_{n=0}^{\infty}\frac{(sa)^{2n+1}}{n!(2n+1)}
\leq 
    \frac{2}{as\sqrt{\pi}}\sum_{n=0}^{\infty}\frac{(sa)^{2(n+1)}}{(n+1)!}
=
    \frac{2}{as\sqrt{\pi}}\left(e^{\sigma^2 a^2-1}\right)~.
\]
We now turn to deal with Example~\ref{ex:smooth_hinge}. This time, we use the 
antiderivative of $(\mathrm{Erf}(s(a-1))-1)/2$. This function smoothly 
interpolates between $-1$ for $a\ll -1$ and $0$ for $a\gg 0$. Therefore, its 
antiderivative with respect to $x$ represents a smooth upper bound on the 
hinge loss, which becomes tighter as $s$ increases. It can be verified that 
the antiderivative (with the constant free parameter fixed so the function has 
the desired behavior) is
\[
\ell(a)= 
\frac{(a-1)(\mathrm{Erf}(s(a-1))-1)}{2}+\frac{e^{-s^2(a-1)^2}}{2\sqrt{\pi}s}.
\]

By a Taylor expansion of the error function, we have that 
\[
\ell'(a)=-\frac{1}{2}+\frac{1}{\sqrt{\pi}}\sum_{n=0}^{\infty}\frac{(-1)^n 
(s(a-1))^{2n+1}}{n!(2n+1)}.
\]
Thus, $\ell'_{+}(a)$ in this case can be upper bounded by
\[
\frac{1}{2}+\frac{1}{\sqrt{\pi}}\sum_{n=0}^{\infty}\frac{(sa)^{2n+1}}{n!(2n+1)} 
\leq 
\frac{1}{2}+\frac{1}{as\sqrt{\pi}}\sum_{n=0}^{\infty}\frac{(sa)^{2(n+1)}}{(n+1)!} 
\leq \frac{1}{2}+\frac{1}{as\sqrt{\pi}}\left(e^{s^2a^2}-1\right).
\]

\section{Proof of \thmref{thm:kernel_main}} \label{app:ProofKernelMain}

Our algorithm corresponds to Zinkevich's algorithm \cite{Zinkevich03} in a
finite horizon setting, where we assume the sequence of examples is
$\tilde{g}_1\tilde{\Psi}(\bx_1),\ldots,\tilde{g}_T\tilde{\Psi}(\bx_T)$,
the cost function is linear, and the learning rate at round $t$ is
$\eta/\sqrt{T}$. By a straightforward adaptation of the standard regret bound
for that algorithm (see \cite{Zinkevich03}), we have that for any $\bw$ such
that $\norm{\bw}^2\leq B_{\bw}$,
\[
\sum_{t=1}^T 
\inner{\bw_t,\tilde{g}_t\tilde{\Psi}(\bx_t)}-\sum_{t=1}^T \inner{\bw,\tilde{g}_t\tilde{\Psi}(\bx_t)} ~\leq~ 
\frac{1}{2}\left(\frac{B_{\bw}}{\eta}+\frac{\eta}{T}\sum_{t=1}^{T}\norm{\tilde{g}_t\tilde{\Psi}(\bx_t)}^2\right)\sqrt{T}.
\]
We now take expectation of both sides in the
inequality above. The expectation of the right-hand side is simply
\[
    \E\left[\frac{1}{2}\left(\frac{B_{\bw}}{\eta}
+
    \frac{\eta}{T}\sum_{t=1}^{T}\E_t\bigl[\tilde{g}^2_t\bigr]\,
    \E_t\left[\norm{\tilde{\Psi}(\bx_t)}^2\right]\right)\sqrt{T}\right]
~\leq ~
    \frac{1}{2}\left(\frac{B_{\bw}}{\eta}+\eta
    B_{\tilde{g}}B_{\tilde{\Psi}}\right)\sqrt{T}.
\]
As to the left-hand side, note that
\[
\E\left[\sum_{t=1}^T\inner{\bw_t,\tilde{g}_t\tilde{\Psi}(\bx_t)}\right]
~=~\E\left[\sum_{t=1}^T\E_t\left[\inner{\bw_t,\tilde{g}_t\tilde{\Psi}(\bx_t)}\right]\right]
~=~\E\left[\sum_{t=1}^T\inner{\bw_t,y_t\ell'\bigl(y_t\inner{\bw_t,\Psi(\bx_t)}\bigr)\Psi(\bx_t)}\right].
\]
Also,
\[
\E\left[\sum_{t=1}^T \inner{\bw,\tilde{g}_t\tilde{\Psi}(\bx_t)}\right]
=
\sum_{t=1}^T \inner{\bw,\ell'\bigl(y_t\inner{\bw_t,\Psi(\bx_t)}\bigr)\Psi(\bx_t)}~.
\]
Plugging in these expectations and choosing $\eta=\sqrt{\frac{B_{\bw}}{B_{\tilde{g}}B_{\tilde{\Psi}}}}$, we get that for any $\bw$ such that $\norm{\bw}^2\leq B_{\bw}$,
\[
\E\left[\sum_{t=1}^T\inner{\bw_t,y_t\ell'\bigl(y_t\inner{\bw_t,\Psi(\bx_t)}\bigr)\Psi(\bx_t)}
-\sum_{t=1}^T \inner{\bw,\ell'\bigl(y_t\inner{\bw_t,\Psi(\bx_t)}\bigr)\Psi(\bx_t)}\right]
~\leq~ \sqrt{B_{\bw}B_{\tilde{g}}B_{\tilde{\Psi}}T}.
\]
To get the theorem, we note that by convexity of $\ell$, the left-hand side above
can be lower bounded by 
\[
\E\left[\sum_{t=1}^T\ell(y_t\inner{\bw_t,\Psi(\bx_t)})-\sum_{t=1}^T\ell(y_t\inner{\bw,\Psi(\bx_t)})\right].
\]

\section{Proof of Theorem~\ref{thm:impossibility2}}\label{app:impossibility2proof}
Fix a large enough $B_{\bw}\geq 1$ to be specified later.
Let $\bx = (1,0,\dots,0)$ and let $\Dcal$ to be the
uniform distribution on $\{3\bx,-\bx\}$.
To prove the result then we just need to show that
\begin{equation}\label{eq:sets}
\argmin{w \,:\, |w|^2 \leq B_{\bw}} \ell(3w,1)+\ell(-w,1)
\qquad\text{and}\qquad
\argmin{w \,:\, |w|^2\leq B_{\bw}} \ell(w,1)
\end{equation}
are disjoint, for some appropriately chosen $B_{\bw}$.

First, we show that the first set above is a subset of $\{w:|w|^2\leq R\}$ for 
some fixed $R$ which does not depend on $B_{\bw}$. We do a case-by-case 
analysis, depending on how $\ell(\cdot,1)$ looks like.
\begin{enumerate}
\item $\ell(\cdot,1)$ monotonically increases in $\reals$. Impossible by assumption (2).
\item $\ell(\cdot,1)$ monotonically decreases in $\reals$. First, recall that
  since $\ell(\cdot,1)$ is convex, it is differentiable almost anywhere, and
  its derivative is monotonically increasing. Now, since $\ell(\cdot,1)$ is
  convex and bounded from below, $\ell'(w,1)$ must tend to $0$ as $w\rightarrow
  \infty$ (wherever $\ell(\cdot,1)$ is differentiable, which is almost
  everywhere by convexity). Moreover, by assumption (2), $\ell'(w,1)$ is upper
  bounded by a strictly negative constant for any $w<0$. As a result, the
  gradient of $\ell(3w,1)+\ell(-w,1)$, which equals $3\ell'(3w,1)-\ell'(-w,1)$,
  must be positive for large enough $w>0$, and negative for large enough $w<0$,
  so the minimizers of $\ell(3w,1)+\ell(-w,1)$ are in some bounded subset of
  $\reals$.
\item There is some $s\in \reals$ such that $\ell(\cdot,1)$ monotonically decreases
in $(-\infty,s)$ and
monotonically increases in $(s,\infty)$. If the function is constant in 
$(s,\infty)$ or in $(-\infty,s)$, we are back to one of the two previous 
cases. Otherwise, by convexity of $\ell(\cdot)$, we must have some $a,b$, 
$a\leq s\leq b$, such that $\ell(\cdot,1)$ is strictly decreasing at 
$(-\infty,a)$, and strictly increasing at $(b,\infty)$. In that case, it is 
not hard to see that $\ell(3w,1)+\ell(-w,1)$ must be strictly increasing for 
any $w>\max\{|a|,|b|\}$, and strictly decreasing for any $w<-\max\{|a|,|b|\}$. 
So again, the minimizers of $\ell(3w,1)+\ell(-w,1)$ are in some bounded subset 
of $\reals$.
\end{enumerate}
We are now ready to show that the two sets in~(\ref{eq:sets}) must be 
disjoint. Suppose we pick $B_{\bw}$ large enough so that the first set
in~(\ref{eq:sets}) is strictly inside $\{w \,:\, |w|^2\leq B_{\bw}\}$. Assume on the 
contrary that there is some $w,|w|^2<B_{\bw}$, which belongs to both sets
in~(\ref{eq:sets}). By assumption~(2) and the fact that $w$ minimizes 
$\ell(w,1)$, we may assume $w>0$. Therefore, $0\in 
\partial \ell(w,1)$ as well as $0\in 
\partial(\ell(3w,1)+\ell(-w,1))$, where $\partial f$ is the (closed and convex) subgradient set of 
a convex function $f$. By subgradient calculus, this means there is some 
$a/3\in 
\partial \ell(3w,1)$ and $b\in \partial \ell(-w,1)$ such that $a/3-b=0$. This implies 
that $\partial \ell(3w,1) \cap \partial \ell(-w,1)\neq \emptyset$. Now, 
suppose that $\max \partial \ell(-w,1)<0$. This would mean that $\min \partial 
\ell(3w,1)<0$. But then $\ell(\cdot,1)$ is strictly decreasing at $(w,3w)$, 
and in particular $\ell(w,1)>\ell(3w,1)$, contradicting the assumption that 
$w$ minimizes $\ell(\cdot,1)$. So we must have $\max 
\partial \ell(-w,1)\geq 0$. Moreover, $\min \partial \ell(-w,1)\leq 0$ (because $w$ minimizes $\ell(\cdot,1)$ and $-w<
w$). Since the subgradient set is closed and convex, it follows that $0\in 
\partial \ell(-w,1)$. Therefore, both $w$ and $-w$ minimize $\ell(\cdot,1)$. But this means 
that $\ell'(0)=0$, in contradiction to assumption (2).

\section{Proof of \thmref{thm:impossibility}}\label{app:impossibilityproof}
Let $\Dcal$ be a distribution which satisfies~(\ref{eq:impossibility}). The 
idea of the proof is that the learner cannot know if $\Dcal$ is the real 
distribution (on which regret is measured) or the distribution which includes
noise. Specifically, consider the following two adversary strategies: 
\begin{enumerate}
\item At each round, draw an example from $\Dcal$, and present it to the 
learner (with the label $1$) without adding noise.
\item At each round, pick the example $\E_{\Dcal}[\bx]$, add to it
zero-mean noise sampled from $Z-\E_{\Dcal}[\bx]$, where $Z$ is distributed
according to $\Dcal$, and present the noisy example (with the label $1$)
to the learner. 
\end{enumerate}
In both cases the examples presented to a learner appear to come from
the same distribution $\Dcal$. Hence, any learner observing one copy of
each example cannot know which of the two strategies is played by the
adversary. Since~(\ref{eq:impossibility}) implies that
the set of optimal learner strategies for each of the two adversary strategies
are disjoint, by picking an appropriate strategy the adversary can
force a constant regret.

To formalize this argument, fix any learning algorithm that observes one
copy of each example and let $\bw_1,\bw_2,\dots$ be the sequence of generated
predictors. Then it is sufficient to show that at least one of the following
two holds
\begin{align}
\label{eq:imp0}
    &\limsup_{T\to\infty}\max_{\bw\in \Wcal}
    \E\left[\frac{1}{T}\sum_{t=1}^T \ell(\inner{\bw_t,\bx_t},1)
    -  \ell\left(\inner{\bw,\bx_t},1\right)\right] > 0
\\
\label{eq:imp0-1}
    &\limsup_{T\to\infty}\frac{1}{T}\sum_{t=1}^T \ell(\inner{\bw_t,\E[\bx]},1)
    - \min_{\bw\in \Wcal}\ell\bigl(\inner{\bw,\E[\bx]},1\bigr) > 0
\qquad
    \mathrm{w.p.~1}
\end{align}
where in both cases the expectation is with respect to $\Dcal$
and ``w.p.~1'' refers to the randomness of the noise.
First note that~(\ref{eq:imp0}) is implied by
\begin{equation}
\label{eq:imp0-2}
    \limsup_{T\to\infty}
    \frac{1}{T}\sum_{t=1}^T \ell(\inner{\bw_t,\bx_t},1)
    - \min_{\bw\in \Wcal}\E\Bigl[\ell(\inner{\bw,\bx},1)\Bigr] > 0
\qquad
    \mathrm{w.p.~1.}
\end{equation}
Since $\Wcal$ is compact, $\Dcal$ is assumed to be
supported on a compact subset, and $\ell$ is convex and hence continuous, then
$\ell(\inner{\bw,\bx},1)$ is almost surely bounded. So by Azuma's inequality
\[
    \sum_{T=1}^\infty \Pr\left( \frac{1}{T}\sum_{t=1}^T
    \Bigl( \E_t\bigl[\ell(\inner{\bw_t,\bx},1)\bigr]
    - \ell(\inner{\bw_t,\bx_t},1) \Bigr)
\ge
    \epsilon \right)
<
    \infty
\qquad
    \forall \epsilon > 0~.
\]
where the expectation $\E_t[\,\cdot\,]$ is conditioned on the randomness in the
previous rounds.  Letting $\bar{\bw}_t = \frac{1}{t}\sum_{s=1}^t \bw_s$ (which
belongs to $\Wcal$ for all $t$ since it is a convex set), we have
\begin{equation*} 
    \frac{1}{T}\sum_{t=1}^T\ell(\inner{\bw_t,\bx_t},1)
\ge
    \frac{1}{T}\sum_{t=1}^T\E_t[\ell(\inner{\bw_t,\bx},1)]
\ge
    \E\Bigl[\ell\bigl(\inner{\bar{\bw}_T,\bx},1\bigr)\Bigr]
\end{equation*}
where the first inequality holds with
probability 1 as $T\to\infty$ by the Borel-Cantelli lemma, and the second one
holds for every $T$ because $\ell$ is convex.

Similarly, 
\begin{equation*} 
    \frac{1}{T}\sum_{t=1}^T\ell(\inner{\bw_t,\E[\bx]},1)
\ge
    \ell\bigl(\inner{\bar{\bw}_T,\E[\bx]},1\bigr)~.
\end{equation*}
Hence~(\ref{eq:imp0-1})--(\ref{eq:imp0-2}) are obtained if we show that
no single sequence of predictors $\bar{\bw}_1,\bar{\bw}_2,\ldots$ simultaneously
satisfies
\begin{equation}
\label{eq:impl}
    \limsup_{T\to\infty} F_1(\bar{\bw}_T) \le 0
\qquad\text{and}\qquad
    \limsup_{T\to\infty} F_2(\bar{\bw}_T) \le 0
\end{equation}
where
\[
    F_1(\bw_T) =
    \E\Bigl[\ell\bigl(\inner{\bar{\bw}_T,\bx},1\bigr)\Bigr]
    - \min_{\bw\in \Wcal} \E\bigl[\ell\left(\inner{\bw,\bx},1\right)\bigr]
\quad
    F_2(\bw_T) =
    \ell\bigl(\inner{\bar{\bw}_T,\E[\bx]},1\bigr)
    - \min_{\bw\in \Wcal}\ell\bigl(\inner{\bw,\E[\bx]},1\bigr)~.
\]
Suppose on the contrary that there was such a sequence.
Since $\bar{\bw}_T\in \Wcal$ for all $T$, and $\Wcal$ is compact, the sequence 
$\bar{\bw}_1,\bar{\bw_2},\ldots$ has at least a cluster point $\bar{\bw}\in \Wcal$.
Moreover, it is easy to verify that the functions $F_1$ and $F_2$ are continuous.
Indeed, $\ell(\inner{\cdot,\E[\bx]},1)$ is continuous by convexity of $\ell$ and
$\E[\ell(\inner{\cdot,\bx},1)]$ is continuous by the compactness assumptions. 
Hence, any cluster point of $\bar{\bw}_1,\bar{\bw_2},\ldots$ is also a cluster
point of both $F_1$ and $F_2$. Since $F_1,F_2 \ge 0$ by construction, and we
are assuming that neither $F_1(\bar{\bw}) > 0$ nor $F_1(\bar{\bw}) > 0$ for any
cluster point $\bar{\bw}$, we must have $F_1(\bar{\bw}) = F_2(\bar{\bw})=0$.
But this means that $\bar{\bw}$ belongs to both sets appearing
in~(\ref{eq:impossibility}), in contradiction to the assumption they are
disjoint. Thus, no sequence of predictors satisfies~(\ref{eq:impl}), as
desired.

\end{document}